
\documentclass[journal]{IEEEtran}
\ifCLASSINFOpdf
\else
\fi

\usepackage{tikz}
\usepackage{pgfplots}
\begin{filecontents}{speed1.data}
1	42.61
2   45.43
3   45.99
4   45.37
5   42.24
6   36.90
7   32.65
8   34.37
9   37.09
10  33.09
11  25.20
12  16.55
13  24.89
14  33.48
15  38.96
16  43.39
17  45.01
18  45.28
19  44.65
20  42.97
21  41.04
22  38.08
23  39.45
24  40.41
25  39.41
26  37.30
27  35.57
28  36.46
29  39.85
30  42.33
31  41.85
32  44.51
33  45.05
34  44.45
35  41.47
36  36.30
37  32.72
38  34.45
39  36.68
40  33.00
41  25.99
42  18.61
43  25.82
44  33.59
45  39.40
46  42.70
47  45.33
48  46.20
49  45.23
50  42.03
51  37.69
52  33.43
53  35.75
54  37.17
55  32.57
56  24.98
57  18.33
58  26.58
59  33.05
60  40.26
61  41.29
62  44.32
63  45.39
64  45.11
65  42.16
66  36.90
67  32.92
68  33.84
69  36.24
70  33.15
71  25.93
72  19.16
73  24.25
74  30.92
75  38.17
76  41.60
77  44.24
78  45.49
79  45.28
80  43.17
81  39.65
82  34.20
83  34.34
84  36.82
85  32.93
86  28.15
87  25.24
88  28.93
89  33.23
90  39.34
91  41.93
92  44.50
93  45.79
94  45.61
95  43.76
96  40.76
97  35.56
98  35.55
99  37.76
100 33.38
101 28.69
102 27.10
103 30.10
104 33.98
105 39.77
\end{filecontents}

\begin{filecontents}{speedb2.data}
35  41.47
36  36.30
37  32.72
38  34.45
39  36.68
\end{filecontents}

\begin{filecontents}{speedb3.data}
68  33.84
69  36.24
70  33.15
71  25.93
72  19.16
\end{filecontents}

\begin{filecontents}{speedb4.data}
95  43.76
96  40.76
97  35.56
98  35.55
99  37.76
\end{filecontents}

\begin{filecontents}{speedr1.data}
1	36
2   37
3   37
4   37
5   34
6   30
7   26
8   28
9   31
10  28
11  32
12  34
13  32
14  29
15  30
16  33
17  36
18  36
19  34
20  31
21  29
22  26
23  27
24  28
25  27
26  25
27  23
28  24
29  27
30  30
31  29
32  32
33  33
34  32
35  29
36  24
37  20
38  22
39  24
40  21
41  13
42  8
43  13
44  21
45  27
46  30
47  33
48  34
49  33
50  30
51  25
52  21
53  23
54  25
55  20
56  12
57  8
58  14
59  21
60  28
61  29
62  32
63  33
64  33
65  30
66  24
67  20
68  21
69  24
70  21
71  13
72  8
73  12
74  18
75  26
76  29
77  32
78  33
79  33
80  31
81  27
82  21
83  22
84  24
85  20
86  16
87  13
88  16
89  21
90  23
91  26
92  28
93  32
94  33
95  33
96  31
97  27
98  24
99  21
100  13
101  8
102  12
103  18
104  26
105  29
\end{filecontents}

\begin{filecontents}{speedb02.data}
35  29
36  24
37  20
38  22
39  24
\end{filecontents}

\begin{filecontents}{speedb03.data}
68  21
69  24
70  21
71  13
72  8
\end{filecontents}

\begin{filecontents}{speedb04.data}
95  33
96  31
97  27
98  24
99  21
\end{filecontents}

\begin{filecontents}{speedn1.data}
1	42.61
2   45.43
3   45.99
4   45.37
5   42.24
6   36.90
7   32.65
8   34.37
9   37.09
10  33.09
11  25.20
12  16.55
13  24.89
14  33.48
15  38.96
\end{filecontents}

\begin{filecontents}{speedn2.data}
15  38.96
16  43.39
17  45.01
18  45.28
19  44.65
20  42.97
21  41.04
22  38.08
23  39.45
24  40.41
25  39.41
26  37.30
27  35.57
28  36.46
29  39.85
30  42.33
\end{filecontents}

\begin{filecontents}{speedn3.data}
30  42.33
31  41.85
32  44.51
33  45.05
34  44.45
35  41.47
36  36.30
37  32.72
38  34.45
39  36.68
40  33.00
41  25.99
42  18.61
43  25.82
44  33.59
45  39.40
\end{filecontents}

\begin{filecontents}{speedn4.data}
45  39.40
46  42.70
47  45.33
48  46.20
49  45.23
50  42.03
51  37.69
52  33.43
53  35.75
54  37.17
55  32.57
56  24.98
57  18.33
58  26.58
59  33.05
60  40.26
\end{filecontents}

\begin{filecontents}{speedn5.data}
60  40.26
61  41.29
62  44.32
63  45.39
64  45.11
65  42.16
66  36.90
67  32.92
68  33.84
69  36.24
70  33.15
71  25.93
72  19.16
73  24.25
74  30.92
75  38.17
\end{filecontents}

\begin{filecontents}{speedn6.data}
75  38.17
76  41.60
77  44.24
78  45.49
79  45.28
80  43.17
81  39.65
82  34.20
83  34.34
84  36.82
85  32.93
86  28.15
87  25.24
88  28.93
89  33.23
90  39.34
\end{filecontents}

\begin{filecontents}{speedn7.data}
90  39.34
91  41.93
92  44.50
93  45.79
94  45.61
95  43.76
96  40.76
97  35.56
98  35.55
99  37.76
100 33.38
101 28.69
102 27.10
103 30.10
104 33.98
105 39.77
\end{filecontents}

\begin{filecontents}{speedn01.data}
1	36
2   37
3   37
4   37
5   34
6   30
7   26
8   28
9   31
10  28
11  32
12  34
13  32
14  29
15  30
16  33
17  36
18  36
19  34
20  31
21  29
22  26
23  27
24  28
25  27
26  25
27  23
28  24
29  27
30  30
31  29
32  32
33  33
34  32
35  29
36  24
37  20
38  22
39  24
40  21
41  13
42  8
43  13
44  21
45  27
46  30
47  33
48  34
49  33
50  30
51  25
52  21
53  23
54  25
55  20
56  12
57  8
58  14
59  21
60  28
\end{filecontents}

\begin{filecontents}{speedn02.data}
60  28
61  29
62  32
63  33
64  33
65  30
66  24
67  20
68  21
69  24
70  21
71  13
72  8
73  12
74  18
75  26
76  29
77  32
78  33
79  33
80  31
81  27
82  21
83  22
84  24
85  20
86  16
87  13
88  16
89  21
90  23
\end{filecontents}

\begin{filecontents}{speedn03.data}
90  23
91  26
92  28
93  32
94  33
95  33
96  31
97  27
98  24
99  21
100  13
101  8
102  12
103  18
104  26
105  29
\end{filecontents}


\usepackage{amsmath,amsthm}
\usepackage{amssymb}
\usepackage{bbm}
\usepackage{amsfonts}
\usepackage{graphicx}
\usepackage{subfigure}
\graphicspath{{graphics/}}
\usepackage[ruled,vlined]{algorithm2e}
\usepackage{algorithmic}
\usepackage{tabularx,booktabs}
\usepackage{tikz}

\usepackage{xcolor}
\usepackage[colorlinks]{hyperref}
\hypersetup{colorlinks,breaklinks,linkcolor=blue,urlcolor=blue,anchorcolor=blue,citecolor=blue}
\usepackage{booktabs,caption}
\usepackage{multirow}
\usepackage{lscape}
\usepackage{epstopdf}
\usepackage{lineno}
\usepackage{subfigure}

\hyphenpenalty=2000
\tolerance=500
\usepackage{microtype}
\usepackage{lineno}
\urlstyle{same}

\newtheorem{lemma}{Lemma}

\newtheorem{theorem}{Theorem}

\usepackage{lineno}

\usepackage{cite}

\hyphenation{op-tical net-works semi-conduc-tor}

\begin{document}
%
\title{Low-Rank Autoregressive Tensor Completion for Spatiotemporal Traffic Data Imputation}
%
%
%

\author{Xinyu Chen,
        Mengying Lei,
        Nicolas Saunier,
        Lijun Sun{*}
\thanks{Xinyu Chen and Nicolas Saunier are with the Civil, Geological and Mining Engineering Department, Polytechnique Montreal, Montreal, QC H3T 1J4, Canada. E-mail: chenxy346@gmail.com (Xinyu Chen), nicolas.saunier@polymtl.ca (Nicolas Saunier).}
\thanks{Mengying Lei and Lijun Sun are with the Department of Civil Engineering, McGill University, Montreal, QC H3A 0C3, Canada. E-mail: mengying.lei@mail.mcgill.ca (Mengying Lei), lijun.sun@mcgill.ca (Lijun Sun).}
\thanks{* Corresponding author. Address: 492-817 Sherbrooke Street West, Macdonald Engineering Building, Montreal, Quebec H3A 0C3, Canada}
\thanks{Manuscript received xx; revised xx.}}

\maketitle

\begin{abstract}
Spatiotemporal traffic time series (e.g., traffic volume/speed) collected from sensing systems are often incomplete with considerable corruption and large amounts of missing values,  preventing users from harnessing the full power of the data. Missing data imputation has been a long-standing research topic and critical application for real-world intelligent transportation systems. A widely applied imputation method is low-rank matrix/tensor completion; however, the low-rank assumption only preserves the global structure while ignores the strong local consistency in spatiotemporal data. In this paper, we propose a low-rank autoregressive tensor completion (LATC) framework by introducing \textit{temporal variation} as a new regularization term into the completion of a third-order (sensor $\times$ time of day $\times$ day) tensor. The third-order tensor structure allows us to better capture the global consistency of traffic data, such as the inherent seasonality and day-to-day similarity. To achieve local consistency, we design the temporal variation by imposing an AR($p$) model for each time series with coefficients as learnable parameters. Different from previous spatial and temporal regularization schemes, the minimization of temporal variation can better characterize temporal generative mechanisms beyond local smoothness, allowing us to deal with more challenging scenarios such ``blackout" missing. To solve the optimization problem in LATC, we introduce an alternating minimization scheme that estimates the low-rank tensor and autoregressive coefficients iteratively. We conduct extensive numerical experiments on several real-world traffic data sets, and our results demonstrate the effectiveness of LATC in diverse missing scenarios.
\end{abstract}

\begin{IEEEkeywords}
Spatiotemporal traffic data, missing data imputation, low-rank tensor completion, truncated nuclear norm, autoregressive time series model
\end{IEEEkeywords}

%
\IEEEpeerreviewmaketitle

\section{Introduction}


Spatiotemporal traffic data collected from various sensing systems (e.g. loop detectors and floating cars) serve as the foundation to a wide range of applications and decision-making processes in intelligent transportation systems. The emerging ``big''  data is often large-scale, high-dimensional, and incomplete, posing new challenges to modeling spatiotemporal traffic data. Missing data imputation is one of the most important research questions in spatiotemporal data analysis, since accurate and reliable imputation can help various downstream applications such as traffic forecasting and traffic control/management.

The key to missing data imputation is to efficiently characterize and leverage the complex dependencies and correlations across both spatial and temporal dimensions \cite{bahadori2014fast}. Different from point-referenced systems, traffic state data (e.g., speed and flow) is individual sensor-based with a fixed temporal resolution. This allows us to summarize spatiotemporal traffic state data in the format of a matrix (e.g., sensor $\times$ time) or a tensor (e.g., sensor $\times$ time of day $\times$ day) \cite{chen2019abayesian}, and low-rank matrix/tensor completion becomes a natural solution to solve the imputation problem. Over the past decade, extensive effort has been made on developing low-rank models through principle component analysis, matrix/tensor factorization (with predefined rank) and nuclear norm minimization (see e.g., \cite{li2009dynammo, chen2019abayesian, chen2020anonconvex}). However, the default low-rank structure (e.g., nuclear norm) purely relies on the algebraic property of the data, which is invariant to permutation in the spatial and temporal dimensions. In other words, with the low-rank assumption alone, we essentially overlook the strong ``local" spatial and temporal consistency in the data. For instance, we expect traffic flow data collected in a short period to be similar and adjacent sensors to show similar patterns. To this end, some recent studies have tried to encode such ``local" consistency by introducing total/quadratic variation and graph regularization as a ``smoothness'' prior into low-rank factorization models \cite{xiong2010temporal,bahadori2014fast,rao2015collaborative,yokota2016smooth} and imposing time series dynamics on the temporal latent factor in the factorization framework \cite{yu2016temporal, sen2019think, chen2021bayesian}. However, these studies essentially adopt a bilinear/multilinear factorization model, which requires a predefined rank as a hyperparameter.

In this paper, we propose a low-rank autoregressive tensor completion (LATC) framework to impute missing values in spatiotemporal traffic data. For each completed time series, we define temporal variation as the accumulated sum of autoregressive errors. To model the low-rankness property, we use truncated nuclear norm as an effective approximation to avoid the rank determination problem in factorization models. The final objective function of LATC consists of two components, i.e., the truncated nuclear of the completed tensor and the temporal variation defined on the unfolded time series matrix. The combination allows us to effectively characterize both global patterns and local consistency in spatiotemporal traffic data. The overall contribution of this work is threefold:
\begin{enumerate}
    \item[1)] We integrate the autoregressive time series process into a low-rank tensor completion model to capture both global and local trends in spatiotemporal traffic data. By minimizing the truncated nuclear norm of the third-order (sensor$\times$time of day$\times$day) tensor, we can better characterize day-to-day similarity, which is a unique property of traffic time series data \cite{li2015trend}.
    \item[2)] We develop an alternating learning algorithm to update tensor and coefficient matrix separately. The tensor is updated via ADMM, and the coefficient matrix is updated by least squares with closed-form solution.
    \item[3)] We conduct extensive numerical experiments on four traffic data sets. Imputation results show the superiority and advantage of LATC over recent state-of-the-art models.
\end{enumerate}

The remainder of this paper is organized as follows. We introduce related work and notations in Section~\ref{related_work} and Section~\ref{preliminaries}, respectively. Section~\ref{mothodology} introduces in detail the proposed LATC model. In Section~\ref{experiments}, we conduct extensive experiments on some traffic data sets and make comparison with some baseline models. Finally, we summarize the study in Section~\ref{conclusion}.

\section{Related Work}\label{related_work}

There are two types of low-rank models to solve the spatiotemporal missing data imputation problem.

\noindent\textbf{Temporal matrix factorization}. Factorization models approximate the complete spatiotemporal matrix/tensor using bilinear/multilinear factorization models with a predefined rank parameter. To encode temporal consistency, recent studies have introduced local smoothness and time series dynamics to regularize the temporal factor (see e.g., \cite{xiong2010temporal, yu2016temporal, sen2019think, chen2021bayesian}). The introduction of generative mechanism (e.g., autoregressive model) not only offers better interpolation/imputation accuracy, but also enable the factorization models to perform forecasting. However, a major limitation of these models is that they often require careful tuning and selection of the rank parameter.


\noindent\textbf{Tensor representation}. Another approach is to fold a time series matrix into a third-order tensor (sensor $\times$ time of day $\times$ day) by introducing an additional ``day'' dimension (e.g., \cite{tan2016short, li2019tensor, chen2020anonconvex}). This is a particular case for traffic data given the clear day-to-day similarity, but many real-world time series data resulted from human behavior/activities (e.g., energy/electricity consumption) also exhibit similar patterns. It is expected that the third-order representation captures more information, given that the multivariate time series matrix is in fact one of the unfoldings of the third-order tensor. As a result, the tensor structure not only preserves the dependencies among sensors but also provides an alternative to capture both local and global temporal patterns (e.g., traffic speed data at 9:00 am on Monday might be similar to that of 9:00 am on Tuesday). These tensor-based models have shown superior performance over matrix-based models in missing data imputation tasks.

\section{Notations}\label{preliminaries}

Throughout this work, we use boldface uppercase letters to denote matrices, e.g., $\boldsymbol{X}\in\mathbb{R}^{M\times N}$, boldface lowercase letters to denote vectors, e.g., $\boldsymbol{x}\in\mathbb{R}^{M}$, and lowercase letters to denote scalars, e.g., $x$. Given a matrix $\boldsymbol{X}\in\mathbb{R}^{M\times N}$, we denote the $(m,n)$th entry in $\boldsymbol{X}$ by $x_{m,n}$, and use $\boldsymbol{x}_{m,[t+1:]}\in\mathbb{R}^{(N-t)}$ to denote the sub-vector that consists of the last $N-t$ entries of $\boldsymbol{x}_{m}\in\mathbb{R}^{N}$. The Frobenius norm of $\boldsymbol{X}$ is defined as $\|\boldsymbol{X}\|_{F}=\sqrt{\sum_{m,n}x_{m,n}^{2}}$, and the $\ell_2$-norm of $\boldsymbol{x}$ is defined as $\|\boldsymbol{x}\|_{2}=\sqrt{\sum_{m}x_{m}^{2}}$. We denote a third-order tensor by $\boldsymbol{\mathcal{X}}\in\mathbb{R}^{M\times I\times J}$ and the $k$th-mode ($k=1,2,3$) unfolding of $\boldsymbol{\mathcal{X}}$ by $\boldsymbol{\mathcal{X}}_{(k)}$ \cite{kolda2009tensor}. Correspondingly, the folding operator $\operatorname{fold}_{k}(\cdot)$ converts a matrix to a third-order tensor in the $k$th-mode. Thus, we have  $\operatorname{fold}_{k}(\boldsymbol{\mathcal{X}}_{(k)})=\boldsymbol{\mathcal{X}}$ for any tensor $\boldsymbol{\mathcal{X}}$. For $\boldsymbol{\mathcal{X}}\in\mathbb{R}^{M\times I\times J}$, its Frobenius norm is defined as $\|\boldsymbol{\mathcal{X}}\|_{F}=\sqrt{\sum_{m,i,j}x_{m,i,j}^{2}}$ and its inner product with another tensor is given by $\left\langle\boldsymbol{\mathcal{X}},\boldsymbol{\mathcal{Y}}\right\rangle=\sum_{m,i,j}x_{m,i,j}y_{m,i,j}$ where $\boldsymbol{\mathcal{Y}}$ and $\boldsymbol{\mathcal{X}}$ are of the same size.

\section{Methodology}\label{mothodology}

\subsection{Tensorization for Global Consistency}

We denote the true spatiotemporal traffic data collected from $M$ sensors over $J$ days by $\boldsymbol{Y}$, whose columns correspond to time points and rows correspond to sensors:
\begin{equation}
    \boldsymbol{Y}=\left[\begin{array}{cccc}
    \mid & \mid & & \mid \\
    \boldsymbol{y}_{1} & \boldsymbol{y}_{2} & \cdots & \boldsymbol{y}_{IJ} \\
    \mid & \mid & & \mid
    \end{array}\right]\in\mathbb{R}^{M\times (IJ)},
\end{equation}
where $I$ is the number of time points per day. The observed/incomplete matrix can be written as $\mathcal{P}_{\Omega}(\boldsymbol{Y})$ with observed entries on the support $\Omega$:
\begin{equation} \notag
    [\mathcal{P}_{\Omega}(\boldsymbol{Y})]_{m,n}=\left\{\begin{array}{ll}
    y_{m,n},     & \text{if $(m,n)\in\Omega$,}  \\
    0,     & \text{otherwise}, \\
    \end{array}\right.
\end{equation}
where $m=1,\ldots,M$ and $n=1,\ldots,IJ$.

We next introduce the forward tensorization operator $\mathcal{Q}(\cdot)$ that converts the multivariate time series matrix into a third-order tensor. Temporal dimension of traffic time series is divided into two dimensions, i.e., time of day and day. Formally, a third-order tensor can be generated by the forward tensorization operator as $\boldsymbol{\mathcal{X}}=\mathcal{Q}(\boldsymbol{Y})\in\mathbb{R}^{M\times I\times J}$. Conversely, the resulted tensor can also be converted into the original matrix by $\boldsymbol{Y}=\mathcal{Q}^{-1}(\boldsymbol{\mathcal{X}})\in\mathbb{R}^{M\times (IJ)}$ where $\mathcal{Q}^{-1}(\cdot)$ denotes the inverse operator of $\mathcal{Q}(\cdot)$.


The tensorization step transforms matrix-based imputation problem to a low-rank tensor completion problem. Global consistency can be achieved by minimizing tensor rank. In practice, tensor rank is often approximated using sum of nuclear norms $\|\boldsymbol{\mathcal{X}}\|_{*}$ \cite{liu2013tensor} or truncated nuclear norms $\|\boldsymbol{\mathcal{X}}\|_{r,*}$ \cite{chen2020anonconvex}, where $r$ is a truncation parameter (see section~\ref{sec:latc}). Our motivation for doing so is that the spatiotemporal traffic data can be characterized by both long-term global trends and short-term local trends. The long-term trends refer to certain periodic, seasonal, and cyclical patterns. Traffic flow data over 24 hours on a typical weekday often shows a systematic ``M" shape resulted from travelers' behavioral rhythms, with two peaks during morning and evening rush hours \cite{lai2018modeling}. The pattern also exists at the weekly level with substantial differences from weekdays to weekends. The short-term trends capture certain temporary volatility/perturbation that deviates from the global patterns (e.g., due to incident or special event). The short-term trends seem to be more ``random", but they are common and ubiquitous in reality. LATC leverages both global and local patterns by
using matrix and tensor simultaneously.

\subsection{Temporal Variation for Local Consistency}

We define temporal variation of a time series matrix $\boldsymbol{Z}$ given a coefficient matrix $\boldsymbol{A}\in\mathbb{R}^{M\times d}$ and a time lag set $\mathcal{H}=\{h_1,\ldots,h_d\}$ as
\begin{equation}
    \|\boldsymbol{Z}\|_{\boldsymbol{A},\mathcal{H}}=\sum_{m,t}(z_{m,t}-\sum_{i}a_{m,i}z_{m,t-h_i})^{2}.
    \label{auto_norm}
\end{equation}

As can be seen, $\|\boldsymbol{Z}\|_{\boldsymbol{A},\mathcal{H}}$ quantifies the total squared error when fitting each individual time series $\boldsymbol{z}_m$ with an autoregressive model with coefficient $\boldsymbol{a}_m$. Given an estimated $\boldsymbol{A}$, minimizing the temporal variation will encourage the time series data $\boldsymbol{Z}$ to show stronger temporal consistency. In other words, the multivariate time series matrix $\boldsymbol{Z}$ will be better explained by a series of  autoregressive models parameterized by $\boldsymbol{A}$. It should be noted that both $\boldsymbol{Z}$ and $\boldsymbol{A}$ are variables in the proposed temporal variation term.

\subsection{Low-rank Autoregressive Tensor Completion (LATC)} \label{sec:latc}

The ensure both global consistency and local consistency, we propose LATC as the following optimization model
\begin{equation}
    \begin{aligned}
    \min _{\boldsymbol{\mathcal{X}},\boldsymbol{Z},\boldsymbol{A}}~&\|\boldsymbol{\mathcal{X}}\|_{r,*}+\frac{\lambda}{2}\|\boldsymbol{Z}\|_{\boldsymbol{A},\mathcal{H}} \\ \text { s.t.}~&\left\{\begin{array}{l}\boldsymbol{\mathcal{X}}=\mathcal{Q}\left(\boldsymbol{Z}\right), \\ \mathcal{P}_{\Omega}(\boldsymbol{Z})=\mathcal{P}_{\Omega}(\boldsymbol{Y}), \\ \end{array}\right. \\
    \end{aligned}
    \label{lrtc_ar}
\end{equation}
where $\boldsymbol{Y}\in\mathbb{R}^{M\times (IJ)}$ is the partially observed time series matrix. $r\in\mathbb{N}_{+}$ is the truncation which satisfies $r<\min\{M,I,J\}$.

The formulation of LATC ensures both global consistency and local consistency by combining truncated nuclear norm minimization with temporal variation minimization. The weight parameter $\lambda$ in the objective function controls the trade-off between truncated nuclear norm and temporal variation. Fig.~\ref{framework} shows that $\boldsymbol{Y}$ can be reconstructed with both low-rank properties and time series dynamics because the constraint in ~\eqref{lrtc_ar}, i.e., $\boldsymbol{\mathcal{X}}=\mathcal{Q}(\boldsymbol{Z})$, is closely related to the partially observed matrix $\boldsymbol{Y}$.

\begin{figure*}[!ht]
\centering
  \includegraphics[width=0.75\textwidth]{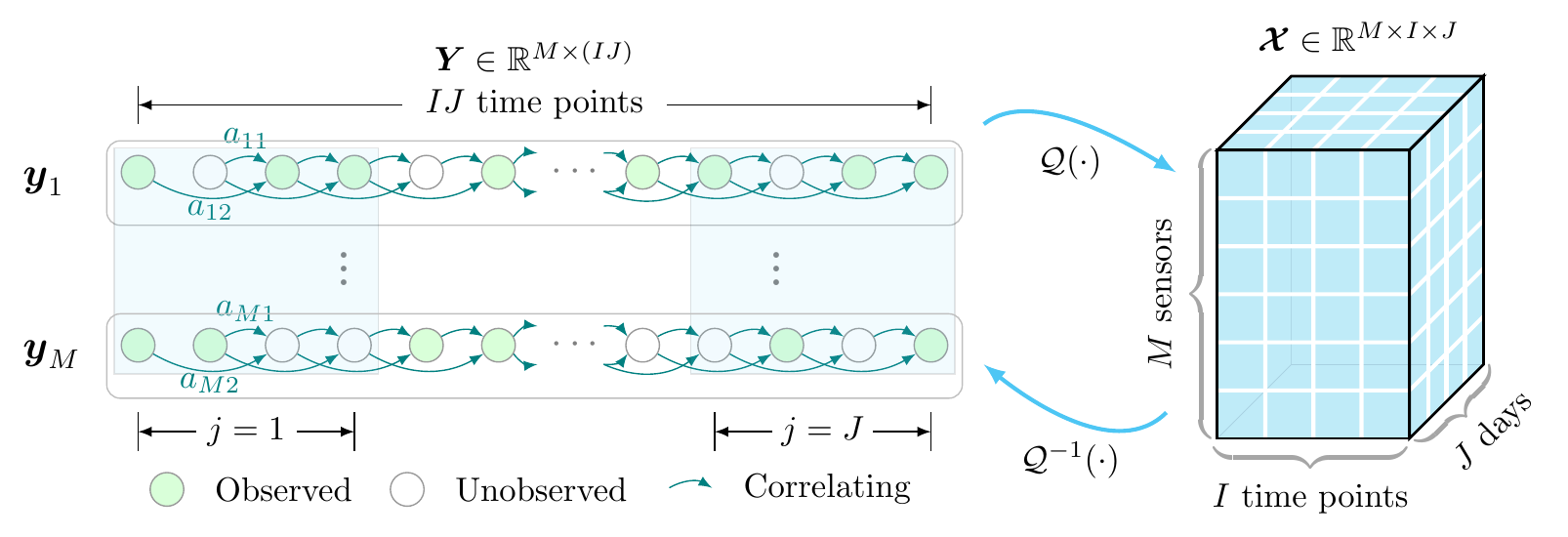}
\caption{Illustration of the proposed LATC framework for spatiotemporal traffic data imputation with time lags $\mathcal{H}=\{1,2\}$. Each time series $\boldsymbol{y}_m,\forall m\in\{1,2,\ldots,M\}$ is modeled by the autoregressive coefficients $\{a_{m1},a_{m2}\}$.}
\label{framework}
\end{figure*}

Most nuclear norm-based tensor completion models employ the Alternating Direction Method of Multipliers (ADMM) algorithm to solve the optimization problem. However, due to the introduction of autoregression coefficient matrix, we can no longer apply the default ADMM algorithm to solve the optimization problem~\eqref{lrtc_ar}. Here we consider applying an alternating minimization scheme by separating the original optimization into two subproblems. Starting with some given initial values $(\boldsymbol{\mathcal{X}}^{0},\boldsymbol{Z}^{0},\boldsymbol{A}^{0})$, we can update $\{(\boldsymbol{\mathcal{X}}^{\ell},\boldsymbol{Z}^{\ell},\boldsymbol{A}^{\ell})\}_{\ell\in\mathbb{N}}$ by solving the two subproblems in an iterative manner. In the implementation, we first fix  $\boldsymbol{A}^{\ell}$ and solve the following problem to update the variables $\boldsymbol{\mathcal{X}}^{\ell+1}$ and $\boldsymbol{Z}^{\ell+1}$:
\begin{equation}\label{admm_prob}
    \begin{aligned}
    \boldsymbol{\mathcal{X}}^{\ell+1},\boldsymbol{Z}^{\ell+1}&:=\operatorname{arg}\min _{\boldsymbol{\mathcal{X}},\boldsymbol{Z}}~\|\boldsymbol{\mathcal{X}}\|_{r,*}+\frac{\lambda}{2}\|\boldsymbol{Z}\|_{\boldsymbol{A}^{\ell},\mathcal{H}} \\ \text { s.t.}~&\left\{\begin{array}{l}\boldsymbol{\mathcal{X}}=\mathcal{Q}(\boldsymbol{Z}), \\ \mathcal{P}_{\Omega}(\boldsymbol{Z})=\mathcal{P}_{\Omega}(\boldsymbol{Y}). \\ \end{array}\right. \\
    \end{aligned}
\end{equation}
where $\ell$ denotes the count of iteration in the alternating minimization scheme. Then, we fix $\boldsymbol{Z}^{\ell+1}$ and solve the following least square problem to estimate the coefficient matrix $\boldsymbol{A}^{\ell+1}$:
\begin{equation}
    \boldsymbol{A}^{\ell+1}:=\operatorname{arg}\min _{\boldsymbol{A}}~\|\boldsymbol{Z}^{\ell+1}\|_{\boldsymbol{A},\mathcal{H}}.
    \label{ar_minimize}
\end{equation}

When $\boldsymbol{A}^{\ell}$ is fixed, the subproblem in Eq.~\eqref{admm_prob} becomes a general low-rank tensor problem, can it can be solved using ADMM in a similar way as in \cite{liu2013tensor} and \cite{hu2013fast}. The augmented Lagrangian function of the optimization in Eq.~\eqref{admm_prob} can be written as
\begin{equation}\label{augmented_lagrangian_func}
\begin{split}
    \mathcal{L}(\boldsymbol{\mathcal{X}},\boldsymbol{Z},\boldsymbol{A}^{\ell},\boldsymbol{\mathcal{T}})=&\|\boldsymbol{\mathcal{X}}\|_{r,*}+\frac{\lambda}{2}\|\boldsymbol{Z}\|_{\boldsymbol{A}^{\ell},\mathcal{H}}\\
    &+\frac{\rho}{2}\|\boldsymbol{\mathcal{X}}-\mathcal{Q}(\boldsymbol{Z})\|_{F}^{2}\\
    &+\big\langle\boldsymbol{\mathcal{X}}-\mathcal{Q}(\boldsymbol{Z}),\boldsymbol{\mathcal{T}}\big\rangle,
\end{split}
\end{equation}
where $\rho$ is the learning rate of ADMM, and $\boldsymbol{\mathcal{T}}\in\mathbb{R}^{M\times I\times J}$ is the dual variable. In particular, we keep  $\mathcal{P}_{\Omega}(\boldsymbol{Z})=\mathcal{P}_{\Omega}(\boldsymbol{Y})$ as a fixed constraint to maintain observation consistency. According to the augmented Lagrangian function, ADMM can transform the problem in Eq.~\eqref{admm_prob} into the following subproblems in an iterative manner:
\begin{align}
    \boldsymbol{\mathcal{X}}^{\ell+1,k+1}:&=\operatorname{arg}\min_{\boldsymbol{\mathcal{X}}}~\mathcal{L}(\boldsymbol{\mathcal{X}},\boldsymbol{Z}^{\ell+1,k},\boldsymbol{A}^{\ell},\boldsymbol{\mathcal{T}}^{\ell+1,k}),\label{sub1} \\
    \boldsymbol{Z}^{\ell+1,k+1}:&=\operatorname{arg}\min_{\boldsymbol{Z}}~\mathcal{L}(\boldsymbol{\mathcal{X}}^{\ell+1,k+1},\boldsymbol{Z},\boldsymbol{A}^{\ell},\boldsymbol{\mathcal{T}}^{\ell+1,k}),\label{sub2} \\
    \boldsymbol{\mathcal{T}}^{\ell+1,k+1}:&=\boldsymbol{\mathcal{T}}^{\ell+1,k}+\rho(\boldsymbol{\mathcal{X}}^{\ell+1,k+1}-\mathcal{Q}(\boldsymbol{Z}^{\ell+1,k+1})),\label{sub3}
\end{align}
where $k$ denotes the count of iteration in the ADMM. In the following, we discuss the detailed solutions to Eqs.~\eqref{sub1} and \eqref{sub2}.

\subsubsection{Update Variable $\boldsymbol{\mathcal{X}}$}

The optimization over $\boldsymbol{\mathcal{X}}$ is a truncated nuclear norm minimization problem. Truncated nuclear norm of any given tensor is the weighted sum of truncated nuclear norm on the unfolding matrices of the tensor, which takes the form:
\begin{equation}
\|\boldsymbol{\mathcal{X}}\|_{r,*}=\sum_{p=1}^{3}\alpha_p\|\boldsymbol{\mathcal{X}}_{(p)}\|_{r,*}
\end{equation}
for tensor $\boldsymbol{\mathcal{X}}\in\mathbb{R}^{M\times I\times J}$ with $\sum_{p=1}^{3}\alpha_{p}=1$. For the minimization of truncated nuclear norm on tensor, the above formula is not in its appropriate form because unfolding a tensor in different modes cannot guarantee the dependencies of variables \cite{liu2013tensor}. Therefore, we introduce $\boldsymbol{\mathcal{X}}_{1},\boldsymbol{\mathcal{X}}_{2},\boldsymbol{\mathcal{X}}_{3}$ and they correspond to the unfoldings of $\boldsymbol{\mathcal{X}}$. Accordingly, it is possible to obtain the closed-form solution for each $\boldsymbol{\mathcal{X}}_{p}$:
\begin{equation}\label{computeX}
\small
\begin{aligned}
    \boldsymbol{\mathcal{X}}_{p}:=&\operatorname{arg}\min_{\boldsymbol{\mathcal{X}}}~\alpha_{p}\|\boldsymbol{\mathcal{X}}_{(p)}\|_{r,*}+\frac{\rho}{2}\left\|\mathcal{Q}^{-1}(\boldsymbol{\mathcal{X}})-\boldsymbol{Z}^{\ell+1,k}\right\|_{F}^{2} \\
    &+\big\langle\mathcal{Q}^{-1}(\boldsymbol{\mathcal{X}})-\boldsymbol{Z}^{\ell+1,k},\mathcal{Q}^{-1}(\boldsymbol{\mathcal{T}}^{\ell+1,k})\big\rangle \\
    =&\operatorname{arg}\min_{\boldsymbol{\mathcal{X}}}~\alpha_{p}\|\boldsymbol{\mathcal{X}}_{(p)}\|_{r,*}\\
    &+\frac{\rho}{2}\left\|\boldsymbol{\mathcal{X}}-\left(\mathcal{Q}(\boldsymbol{Z}^{\ell+1,k})-\boldsymbol{\mathcal{T}}^{\ell+1,k}/\rho\right)\right\|_{F}^{2} \\
    =&\operatorname{fold}_{p}\left(\mathcal{D}_{r,\alpha_p/\rho}\left(\mathcal{Q}(\boldsymbol{Z}^{\ell+1,k})_{(p)}-\boldsymbol{\mathcal{T}}_{(p)}^{\ell+1,k}/\rho\right)\right), \\
\end{aligned}
\end{equation}
where $\mathcal{D}_{\cdot}(\cdot)$ denotes the generalized singular value thresholding that associated with truncated nuclear norm minimization as shown in Lemma~\ref{TNN-minimization}.

\begin{lemma}\label{TNN-minimization}
For any $\alpha,\rho>0$, $\boldsymbol{Z}\in\mathbb{R}^{m\times n}$, and $r\in\mathbb{N}_{+}$ where $r<\min\{m,n\}$, an optimal solution to the truncated nuclear norm minimization problem
\begin{equation}
    \min_{\boldsymbol{X}}~\alpha\|\boldsymbol{X}\|_{r,*}+\frac{\rho}{2}\|\boldsymbol{X}-\boldsymbol{Z}\|_{F}^{2},
    \label{tnn_prob}
\end{equation}
is given by the generalized singular value thresholding \cite{zhang2011penalty,chen2013reduced,lu2015generalized}:
\begin{equation}
    \hat{\boldsymbol{X}}=\mathcal{D}_{r,\alpha/\rho}(\boldsymbol{Z})=\boldsymbol{U}\operatorname{diag}\left([\boldsymbol{\sigma}-\mathbbm{1}_{r}\cdot\alpha/\rho]_{+}\right)\boldsymbol{V}^\top,
    \label{gsvt}
\end{equation}
where $\boldsymbol{U}\operatorname{diag}(\boldsymbol{\sigma})\boldsymbol{V}^{\top}$ is the SVD of $\boldsymbol{Z}$. $[\cdot]_{+}$ denotes the positive truncation at 0 which satisfies $[\sigma-\alpha/\rho]_{+}=\max\{\sigma-\alpha/\rho,0\}$. $\mathbbm{1}_{r}\in\{0,1\}^{\min\{m,n\}}$ is a binary indicator vector whose first $r$ entries are 0 and other entries are 1.
\end{lemma}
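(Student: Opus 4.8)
The plan is to reduce the matrix problem \eqref{tnn_prob} to a separable scalar minimization over the singular values of $\boldsymbol{X}$, exploiting that both terms in the objective are governed by spectral quantities. Write the SVD $\boldsymbol{Z}=\boldsymbol{U}\operatorname{diag}(\boldsymbol{\sigma})\boldsymbol{V}^{\top}$ with $\sigma_1\ge\cdots\ge\sigma_q\ge 0$, $q=\min\{m,n\}$, and recall that the truncated nuclear norm is the sum of all but the $r$ largest singular values, $\|\boldsymbol{X}\|_{r,*}=\sum_{i>r}\sigma_i(\boldsymbol{X})$. Expanding $\tfrac{\rho}{2}\|\boldsymbol{X}-\boldsymbol{Z}\|_F^2=\tfrac{\rho}{2}\big(\|\boldsymbol{X}\|_F^2-2\langle\boldsymbol{X},\boldsymbol{Z}\rangle+\|\boldsymbol{Z}\|_F^2\big)$, invoking von Neumann's trace inequality $\langle\boldsymbol{X},\boldsymbol{Z}\rangle\le\sum_i\sigma_i(\boldsymbol{X})\,\sigma_i$, and using $\|\boldsymbol{X}\|_F^2=\sum_i\sigma_i(\boldsymbol{X})^2$, I would obtain, for every $\boldsymbol{X}\in\mathbb{R}^{m\times n}$ with ordered singular values $x_i:=\sigma_i(\boldsymbol{X})$,
\[
\alpha\|\boldsymbol{X}\|_{r,*}+\frac{\rho}{2}\|\boldsymbol{X}-\boldsymbol{Z}\|_F^2\ \ge\ \sum_{i\le r}\frac{\rho}{2}(x_i-\sigma_i)^2+\sum_{i>r}\Big(\alpha x_i+\frac{\rho}{2}(x_i-\sigma_i)^2\Big)\ =:\ h(\boldsymbol{x}),
\]
where $\boldsymbol{x}$ ranges over the cone $x_1\ge\cdots\ge x_q\ge 0$.

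The next step is to minimize $h$. It is separable, so dropping the ordering constraint the coordinatewise minimizers on $[0,\infty)$ are $x_i^{\star}=\sigma_i$ for $i\le r$ and $x_i^{\star}=[\sigma_i-\alpha/\rho]_+$ for $i>r$, each an elementary one-dimensional (quadratic, resp.\ quadratic-plus-linear) minimization. I would then verify feasibility: because $t\mapsto[t-\alpha/\rho]_+$ is nondecreasing and $\sigma_1\ge\cdots\ge\sigma_q$, the vector $\boldsymbol{x}^{\star}$ is still sorted in decreasing order — in particular $x_r^{\star}=\sigma_r\ge\sigma_{r+1}\ge[\sigma_{r+1}-\alpha/\rho]_+=x_{r+1}^{\star}$ — and it is nonnegative, so $\boldsymbol{x}^{\star}$ lies in the cone and solves the reduced problem. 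Observe that $\boldsymbol{x}^{\star}=[\boldsymbol{\sigma}-\mathbbm{1}_r\cdot\alpha/\rho]_+$, precisely the thresholded spectrum in \eqref{gsvt}.

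Finally I would show the lower bound is attained. Set $\hat{\boldsymbol{X}}=\boldsymbol{U}\operatorname{diag}(\boldsymbol{x}^{\star})\boldsymbol{V}^{\top}=\mathcal{D}_{r,\alpha/\rho}(\boldsymbol{Z})$. Since $\boldsymbol{x}^{\star}$ is nonnegative and decreasing, this is a valid SVD, so the singular values of $\hat{\boldsymbol{X}}$ are exactly $\boldsymbol{x}^{\star}$; and since $\hat{\boldsymbol{X}}$ shares the factors $\boldsymbol{U},\boldsymbol{V}$ with $\boldsymbol{Z}$, von Neumann's inequality is tight, $\langle\hat{\boldsymbol{X}},\boldsymbol{Z}\rangle=\sum_i x_i^{\star}\sigma_i$. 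Consequently the objective at $\hat{\boldsymbol{X}}$ equals $h(\boldsymbol{x}^{\star})$, which lower-bounds the objective at every $\boldsymbol{X}$; hence $\hat{\boldsymbol{X}}$ is optimal, which is all the statement asserts (only ``an'' optimal solution is claimed, appropriate since the truncated nuclear norm is nonconvex). I expect the reduction to be the crux: one must apply von Neumann's trace inequality in the correct direction, remember to invoke its equality case for $\hat{\boldsymbol{X}}$, and carefully check that the unconstrained separable minimizer still obeys the decreasing-order constraint on singular values, so that the scalar and matrix problems share the same optimal value. The remaining pieces are routine.
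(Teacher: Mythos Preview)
Your argument is correct and is the standard route to this result via von Neumann's trace inequality and coordinatewise minimization; the feasibility check that the thresholded spectrum remains nonincreasing is handled carefully, and the equality case is invoked in the right place. Note, however, that the paper does not actually supply a proof of this lemma: it states the result and defers to the generalized singular value thresholding literature \cite{zhang2011penalty,chen2013reduced,lu2015generalized}, so there is no in-paper argument to compare against --- your write-up in fact fills in what the paper leaves to citation, and it does so along the same lines as those references.
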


Gathering the results of $\boldsymbol{\mathcal{X}}_{1},\boldsymbol{\mathcal{X}}_{2},\boldsymbol{\mathcal{X}}_{3}$ in Eq.~\eqref{computeX}, we can update the variable $\boldsymbol{\mathcal{X}}$ by
\begin{equation}\label{update_x}
    \boldsymbol{\mathcal{X}}^{\ell+1,k+1}:=\sum_{p=1}^{3}\alpha_p\boldsymbol{\mathcal{X}}_{p}.
\end{equation}

\subsubsection{Update Variable $\boldsymbol{Z}$}

Given that $\boldsymbol{\mathcal{X}}=\mathcal{Q}(\boldsymbol{Z})$, we can rewrite Eq.~\eqref{sub2} with respect to $\boldsymbol{Z}$ as follows,
\begin{equation}\label{sub2_formula}
\small
\begin{aligned}
    \boldsymbol{Z}^{\ell+1,k+1}:=&\operatorname{arg}\min_{\boldsymbol{Z}}~\frac{\lambda}{2}\|\boldsymbol{Z}\|_{\boldsymbol{A}^{\ell},\mathcal{H}}+\frac{\rho}{2}\left\|\boldsymbol{\mathcal{X}}^{\ell+1,k+1}-\mathcal{Q}(\boldsymbol{Z})\right\|_{F}^{2}\\
    &-\big\langle\mathcal{Q}(\boldsymbol{Z}),\boldsymbol{\mathcal{T}}^{\ell+1,k}\big\rangle \\
    =&\operatorname{arg}\min_{\boldsymbol{Z}}~\frac{\lambda}{2}\left\|\boldsymbol{Z}\right\|_{\boldsymbol{A}^{\ell},\mathcal{H}}\\
    &+\frac{\rho}{2}\left\|\boldsymbol{Z}-\mathcal{Q}^{-1}(\boldsymbol{\mathcal{X}}^{\ell+1,k+1}+\boldsymbol{\mathcal{T}}^{\ell+1,k}/\rho)\right\|_{F}^{2}. \\
\end{aligned}
\end{equation}

We use the following Lemma~\ref{lemma_ar} to solve this optimization problem.

\begin{lemma}\label{lemma_ar}
For any multivariate time series $\boldsymbol{Z}\in\mathbb{R}^{M\times T}$ which consists of $M$ time series over $T$ consecutive time points, the autoregressive process for any $(m,t)$th element of $\boldsymbol{Z}$ takes
\begin{equation}
{z}_{m,t}\approx\sum_{i=1}^{d}a_{m,i}z_{m,t-h_i},
\end{equation}
with autoregressive coefficient $\boldsymbol{A}\in\mathbb{R}^{M\times d}$ and time lag set $\mathcal{H}=\{h_1,h_2,\ldots,h_d\}$. This autoregressive process also takes the following general formula:
\begin{equation}\label{vec_mat_ar}
\boldsymbol{\Psi}_{0}\boldsymbol{Z}^\top\approx\sum_{i=1}^{d}\boldsymbol{\Psi}_{i}(\boldsymbol{a}_{i}^\top\odot \boldsymbol{Z}^\top)={\boldsymbol{\Psi}}(\boldsymbol{A}^\top\odot \boldsymbol{Z}^\top),
\end{equation}
and for each time series $\boldsymbol{z}_{m}\in\mathbb{R}^{T},\forall m$, we have
\begin{equation}\label{ar_vector_form}
\boldsymbol{\Psi}_{0}\boldsymbol{z}_{m}\approx\sum_{i=1}^{d}a_{m,i}\boldsymbol{\Psi}_{i}\boldsymbol{z}_{m},
\end{equation}
where $\odot$ denotes the Khatri-Rao product, and
\begin{equation*}
\begin{aligned}
\boldsymbol{\Psi}_{0}&=\left[\begin{array}{cc}
  \boldsymbol{0}_{(T-h_d)\times h_d} & \boldsymbol{I}_{T-h_d} \\
\end{array}\right]\in\mathbb{R}^{(T-h_d)\times T}, \\
\boldsymbol{\Psi}_{i}&=\left[\begin{array}{ccc}
  \boldsymbol{0}_{(T-h_d)\times (h_d-h_i)} & \boldsymbol{I}_{T-h_d} & \boldsymbol{0}_{(T-h_d)\times h_i} \\
\end{array}\right]\\
&\in\mathbb{R}^{(T-h_d)\times T},i=1,2,\ldots,d, \\
{\boldsymbol{\Psi}}&=\left[\begin{array}{cccc}
    \boldsymbol{\Psi}_{1} & \boldsymbol{\Psi}_{2} & \cdots & \boldsymbol{\Psi}_{d} \\
\end{array}\right]\in\mathbb{R}^{(T-h_d)\times (dT)}, \\
\end{aligned}
\end{equation*}
are matrices defined based on time lag set $\mathcal{H}$.
\end{lemma}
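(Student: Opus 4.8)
The plan is to prove the lemma by a direct componentwise verification, first for a single time series and then by assembling the columns; no inequalities or limiting arguments are involved, only bookkeeping with the selection matrices $\boldsymbol{\Psi}_i$ and the Khatri--Rao product. First I would fix a row index $m$ and describe explicitly how $\boldsymbol{\Psi}_0$ and each $\boldsymbol{\Psi}_i$ act on $\boldsymbol{z}_m\in\mathbb{R}^{T}$. Since $\boldsymbol{\Psi}_0=[\,\boldsymbol{0}_{(T-h_d)\times h_d}\ \ \boldsymbol{I}_{T-h_d}\,]$, its product with $\boldsymbol{z}_m$ has $j$th entry $z_{m,h_d+j}$ for $j=1,\dots,T-h_d$; and since the identity block of $\boldsymbol{\Psi}_i$ is preceded by exactly $h_d-h_i$ zero columns, the $j$th entry of $\boldsymbol{\Psi}_i\boldsymbol{z}_m$ is $z_{m,(h_d-h_i)+j}$. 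Setting $t=h_d+j$, the range $j=1,\dots,T-h_d$ corresponds to $t\in\{h_d+1,\dots,T\}$, which --- because $h_d=\max\mathcal{H}$ --- is precisely the set of time indices for which every lagged value $z_{m,t-h_i}$ is defined, and $(h_d-h_i)+j=t-h_i$.

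With this dictionary in hand, the scalar autoregressive relation $z_{m,t}\approx\sum_{i=1}^{d}a_{m,i}z_{m,t-h_i}$, read entrywise over $j$, is exactly $(\boldsymbol{\Psi}_0\boldsymbol{z}_m)_j\approx\sum_{i=1}^{d}a_{m,i}(\boldsymbol{\Psi}_i\boldsymbol{z}_m)_j$; stacking the $T-h_d$ equations gives \eqref{ar_vector_form}. This identity is the heart of the lemma --- the remaining displays are only a rewriting of it in matrix form.

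To deduce \eqref{vec_mat_ar} I would assemble the per-series identities as columns. The matrix $\boldsymbol{Z}^\top\in\mathbb{R}^{T\times M}$ has $m$th column $\boldsymbol{z}_m$, and by the definition of the columnwise Kronecker (Khatri--Rao) product the $m$th column of $\boldsymbol{a}_i^\top\odot\boldsymbol{Z}^\top$ is $a_{m,i}\boldsymbol{z}_m$; hence the $m$th column of $\boldsymbol{\Psi}_i(\boldsymbol{a}_i^\top\odot\boldsymbol{Z}^\top)$ is $a_{m,i}\boldsymbol{\Psi}_i\boldsymbol{z}_m$, and summing over $i$ and comparing with the $m$th column of $\boldsymbol{\Psi}_0\boldsymbol{Z}^\top$ (which is $\boldsymbol{\Psi}_0\boldsymbol{z}_m$) reproduces \eqref{ar_vector_form} in every column --- this is the first equality in \eqref{vec_mat_ar}. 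The second equality, $\sum_{i=1}^{d}\boldsymbol{\Psi}_i(\boldsymbol{a}_i^\top\odot\boldsymbol{Z}^\top)=\boldsymbol{\Psi}(\boldsymbol{A}^\top\odot\boldsymbol{Z}^\top)$, is then a block-matrix identity: the $m$th column of $\boldsymbol{A}^\top\odot\boldsymbol{Z}^\top$ is the stacked vector $(a_{m,1}\boldsymbol{z}_m^\top,\dots,a_{m,d}\boldsymbol{z}_m^\top)^\top$, and multiplying it by the partitioned matrix $\boldsymbol{\Psi}=[\boldsymbol{\Psi}_1\ \cdots\ \boldsymbol{\Psi}_d]$ yields $\sum_{i=1}^{d}a_{m,i}\boldsymbol{\Psi}_i\boldsymbol{z}_m$, matching the previous expression columnwise.

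The only place where care is needed is the index bookkeeping: getting the offset $h_d-h_i$ in $\boldsymbol{\Psi}_i$ right, noting that the usable time window has length $T-h_d$ rather than $T$, and fixing the Khatri--Rao convention (columnwise Kronecker product) so that $\boldsymbol{a}_i^\top\odot\boldsymbol{Z}^\top$ indeed rescales column $m$ by the scalar $a_{m,i}$. Once these conventions are nailed down, all three displayed equations follow from one another by elementary linear algebra, with the scalar AR relation as the single substantive input.
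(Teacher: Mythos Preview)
Your proof is correct. The paper states this lemma without proof, so there is nothing to compare against; your direct componentwise verification---identifying $(\boldsymbol{\Psi}_0\boldsymbol{z}_m)_j=z_{m,h_d+j}$ and $(\boldsymbol{\Psi}_i\boldsymbol{z}_m)_j=z_{m,(h_d-h_i)+j}$, then assembling columns via the Khatri--Rao convention---is exactly the bookkeeping the lemma records, and it fills in the omitted details cleanly.
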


According to Lemma~\ref{lemma_ar}, there are two options for updating $\boldsymbol{Z}$ when $\boldsymbol{A}$ and $\mathcal{H}$ are known. The first is to minimize the errors in the form of matrix as described in Eq.~\eqref{vec_mat_ar}, and the second is to minimize the errors in the form of vector as described in Eq.~\eqref{ar_vector_form}. The first solution involves complicated operations and possibly high computational cost (see Theorem~\ref{theorem_ar} in Appendix~\ref{supp_theorem} for details). We follow the second approach which takes the vector form for optimizing $\boldsymbol{Z}$. This yields a closed-form solution in Lemma~\ref{lemma_ar_solution}.

\begin{lemma}\label{lemma_ar_solution}
Suppose $\boldsymbol{\Psi}_{0},\boldsymbol{\Psi}_{1},\ldots,\boldsymbol{\Psi}_{d}\in\mathbb{R}^{(T-h_d)\times T}$ and autoregressive coefficient $\boldsymbol{A}\in\mathbb{R}^{M\times d}$ are known as defined in Lemma~\ref{lemma_ar}, then for any $m\in\{1,2,\ldots,M\}$, an optimal solution to the problem
\begin{equation}
    \boldsymbol{z}_{m}:=\operatorname{arg}\min_{\boldsymbol{z}}~\frac{1}{2}\left\|\boldsymbol{\Psi}_{0}\boldsymbol{z}-\sum_{i=1}^{d}a_{m,i}\boldsymbol{\Psi}_{i}\boldsymbol{z}\right\|_{2}^{2}+\frac{\alpha}{2}\|\boldsymbol{z}-\boldsymbol{x}_{m}\|_{2}^{2},
\end{equation}
is given by
\begin{equation}\label{solution_to_z}
    \boldsymbol{z}_{m}:=\alpha(\boldsymbol{B}_{m}^\top\boldsymbol{B}_{m}+\alpha\boldsymbol{I}_{T})^{-1}\boldsymbol{x}_{m},
\end{equation}
where $\boldsymbol{B}_{m}=\boldsymbol{\Psi}_{0}-\sum_{i=1}^{d}a_{m,i}\boldsymbol{\Psi}_{i}$.
\end{lemma}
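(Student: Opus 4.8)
\textbf{Proof proposal for Lemma~\ref{lemma_ar_solution}.}
The plan is to recognize the objective as a strictly convex quadratic in $\boldsymbol{z}$ and solve it by setting its gradient to zero. First I would use linearity to collapse the two $\boldsymbol{\Psi}$-terms: since all $\boldsymbol{\Psi}_0,\boldsymbol{\Psi}_1,\ldots,\boldsymbol{\Psi}_d$ share the common shape $(T-h_d)\times T$, the vector $\boldsymbol{\Psi}_{0}\boldsymbol{z}-\sum_{i=1}^{d}a_{m,i}\boldsymbol{\Psi}_{i}\boldsymbol{z}$ equals $\big(\boldsymbol{\Psi}_{0}-\sum_{i=1}^{d}a_{m,i}\boldsymbol{\Psi}_{i}\big)\boldsymbol{z}=\boldsymbol{B}_{m}\boldsymbol{z}$, so the objective becomes $f(\boldsymbol{z})=\tfrac{1}{2}\|\boldsymbol{B}_{m}\boldsymbol{z}\|_{2}^{2}+\tfrac{\alpha}{2}\|\boldsymbol{z}-\boldsymbol{x}_{m}\|_{2}^{2}$.

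Next I would argue $f$ is strictly convex: its Hessian is $\boldsymbol{B}_{m}^{\top}\boldsymbol{B}_{m}+\alpha\boldsymbol{I}_{T}$, which is the sum of a positive semidefinite matrix and $\alpha\boldsymbol{I}_{T}$ with $\alpha>0$, hence positive definite. Therefore $f$ has a unique global minimizer, characterized by the first-order optimality condition $\nabla f(\boldsymbol{z})=\boldsymbol{0}$. Computing the gradient gives
\begin{equation}
\nabla f(\boldsymbol{z})=\boldsymbol{B}_{m}^{\top}\boldsymbol{B}_{m}\boldsymbol{z}+\alpha(\boldsymbol{z}-\boldsymbol{x}_{m})=(\boldsymbol{B}_{m}^{\top}\boldsymbol{B}_{m}+\alpha\boldsymbol{I}_{T})\boldsymbol{z}-\alpha\boldsymbol{x}_{m}.
\end{equation}

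Setting this to zero yields the normal equation $(\boldsymbol{B}_{m}^{\top}\boldsymbol{B}_{m}+\alpha\boldsymbol{I}_{T})\boldsymbol{z}=\alpha\boldsymbol{x}_{m}$, and since the coefficient matrix was already shown to be positive definite and thus invertible, I can left-multiply by its inverse to obtain $\boldsymbol{z}_{m}=\alpha(\boldsymbol{B}_{m}^{\top}\boldsymbol{B}_{m}+\alpha\boldsymbol{I}_{T})^{-1}\boldsymbol{x}_{m}$, which is exactly Eq.~\eqref{solution_to_z}. Honestly there is no real obstacle here: the argument is routine convex optimization, and the only point that warrants an explicit sentence is the invertibility of $\boldsymbol{B}_{m}^{\top}\boldsymbol{B}_{m}+\alpha\boldsymbol{I}_{T}$, which is precisely why the ridge-type term $\tfrac{\alpha}{2}\|\boldsymbol{z}-\boldsymbol{x}_{m}\|_{2}^{2}$ with $\alpha>0$ is included in the formulation — without it, $\boldsymbol{B}_{m}$ is typically rank-deficient (it is $(T-h_d)\times T$) and the problem would not have a unique solution.
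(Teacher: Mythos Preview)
Your proof is correct; the paper does not actually spell out a proof of Lemma~\ref{lemma_ar_solution} (it treats the result as a standard least-squares fact, cf.\ the Remark following the lemma), but your gradient-equals-zero derivation is exactly the method the paper employs for the closely analogous Theorem~\ref{theorem_ar} in Appendix~\ref{supp_theorem}. Your added observations about strict convexity and the role of $\alpha>0$ in guaranteeing invertibility are accurate and even slightly more careful than what the paper provides.
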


\noindent\textbf{Remark}. Lemma~\ref{lemma_ar_solution} in fact provides a least squares solution for $\boldsymbol{z}_{m}$. It is also helpful to define $\boldsymbol{B}_{m},m=1,2,\ldots,M$ as sparse matrices and interpret $\boldsymbol{z}_{m}$ as the solution of the following linear equation:
\begin{equation}
    (\boldsymbol{B}_{m}^\top\boldsymbol{B}_{m}+\alpha\boldsymbol{I}_{T})\boldsymbol{z}_{m}=\alpha\boldsymbol{x}_{m}.
\end{equation}
This can help avoid the expensive inverse operation on the $T$-by-$T$ matrix since $T$ is a possibly large value.

According to Lemma~\ref{lemma_ar_solution}, for any $m\in\{1,2,\ldots,M\}$, the closed-form solution to Eq.~\eqref{sub2_formula} is given by
\begin{equation}\label{sub2_solution}
\begin{aligned}
    \boldsymbol{z}_{m}^{\ell+1,k+1}:=&\frac{\rho}{\lambda}\left(\boldsymbol{B}_{m}^\top\boldsymbol{B}_{m}+\frac{\rho}{\lambda}\boldsymbol{I}_{T}\right)^{-1} \\
    &\cdot\mathcal{Q}_{m}^{-1}(\boldsymbol{\mathcal{X}}^{\ell+1,k+1}+\boldsymbol{\mathcal{T}}^{\ell+1,k}/\rho),
\end{aligned}
\end{equation}
where $\boldsymbol{B}_{m}=\boldsymbol{\Psi}_{0}-\sum_{i=1}^{d}a^{\ell}_{m,i}\boldsymbol{\Psi}_{i}$ in which $\boldsymbol{\Psi}_{0},\boldsymbol{\Psi}_{1},\ldots,\boldsymbol{\Psi}_{d}$ follow the same definition as in Lemma~\ref{lemma_ar}.

\subsubsection{Update Variable $\boldsymbol{A}$}

As mentioned above, $\boldsymbol{A}\in\mathbb{R}^{M\times d}$ is the coefficient matrix in the defined temporal variation term. To estimate $\boldsymbol{A}$, we solve the following problem derived from Eq.~\eqref{ar_minimize}:
\begin{equation}
\begin{aligned}
    \boldsymbol{A}^{\ell+1}:&=\operatorname{arg}\min_{\boldsymbol{A}}~\sum_{m,t}(z_{m,t}^{\ell+1,K}-\sum_{i}a_{m,i}z_{m,t-h_i}^{\ell+1,K})^2 \\
    &=\operatorname{arg}\min_{\boldsymbol{A}}~\sum_{m}\left\|\boldsymbol{z}^{\ell+1,K}_{m,[h_d+1:]}-\boldsymbol{V}_{m}\boldsymbol{a}_{m}\right\|_{2}^{2},
\end{aligned}
\end{equation}
where $\boldsymbol{V}_{m}=\left(\boldsymbol{v}_{h_d+1},\cdots,\boldsymbol{v}_{IJ}\right)^\top\in\mathbb{R}^{(IJ-h_d)\times d}$ and $\boldsymbol{v}_{t}=({z}^{\ell+1,K}_{m,t-h_1},\cdots,{z}^{\ell+1,K}_{m,t-h_d})^\top\in\mathbb{R}^{d},t=h_d+1,\ldots,IJ$ are formed by ${\boldsymbol{Z}}^{\ell+1,K}$. Obviously, this optimization has a closed-form solution, which is given by
\begin{equation}\label{sub3_solution}
    \boldsymbol{a}_{m}^{\ell+1}:=\boldsymbol{V}_{m}^{\dagger}\boldsymbol{z}^{\ell+1,K}_{m,[h_d+1:]},\forall m,
\end{equation}
where $\cdot^{\dagger}$ denotes the pseudo-inverse.


Algorithm~\ref{imputer} shows the overall algorithm for solving LATC. The algorithm has three parameters $\rho$, $\lambda$ and $r$. Parameter $\rho$ controls the ADMM and the singular value thresholding. Parameter $\lambda$ is a trade-off between truncated nuclear norm and temporal variation, which can be typically set to $\lambda=c\cdot\rho$. Thus, $c=1$ implies that these two norms have the same importance in the objective. The recovered matrix is computed by $\hat{\boldsymbol{X}}^{\ell}=\mathcal{Q}^{-1}(\boldsymbol{\mathcal{X}}^{\ell,K})$ at each outer iteration. The algorithm returns the converged $\hat{\boldsymbol{X}}$ as the final result, if the convergence criteria is met.

\begin{algorithm}[!tb]
\caption{$\text{imputer}(\boldsymbol{Y},\mathcal{H},\rho,\lambda,r)$}
\label{imputer}
Initialize $\boldsymbol{\mathcal{T}}^{0,0}$ as zeros and $\boldsymbol{A}^{0}$ as small random values. Set $\mathcal{P}_{\Omega}(\boldsymbol{Z}^{0,0})=\mathcal{P}_{\Omega}(\boldsymbol{Y})$, $\alpha_1=\alpha_2=\alpha_3=\frac{1}{3}$, $K=3$, and $\ell=0$. \\
\While{not converged}{
\For{$k=0$ \KwTo $K-1$}{
$\rho=\min\{1.05\times\rho, \rho_{\text{max}}\}$; \\
\For{$j=1$ \KwTo $J$}{
Compute $\boldsymbol{\mathcal{X}}_{j}$ by Eq.~\eqref{computeX};
}
Update $\boldsymbol{\mathcal{X}}^{\ell+1,k+1}$ by Eq.~\eqref{update_x}; \\
\For{$m=1$ \KwTo $M$}{
Update $\boldsymbol{z}_{m}^{\ell+1,k+1}$ by Eq.~\eqref{sub2_solution};
}
Update $\boldsymbol{\mathcal{T}}^{\ell+1,k+1}$ by Eq.~\eqref{sub3}; \\
Transform observation information by letting $\mathcal{P}_{\Omega}(\boldsymbol{Z}^{\ell+1,k+1})=\mathcal{P}_{\Omega}(\boldsymbol{Y})$;
}
\For{$m=1$ \KwTo $M$}{
Update $\boldsymbol{a}_{m}^{\ell+1}$ by Eq.~\eqref{sub3_solution};
}
$\ell:=\ell+1$;
}

\Return recovered matrix $\hat{\boldsymbol{X}}$.
\end{algorithm}

\section{Experiments}\label{experiments}

In this section, we evaluate the proposed LATC model on several real-world traffic data sets with different missing patterns.

\subsection{Traffic Data Sets}

We use the following four spatiotemporal traffic sets for our benchmark experiment.
\begin{itemize}
    \item \textbf{(G)}: Guangzhou urban traffic speed data set.\footnote{\url{https://doi.org/10.5281/zenodo.1205229}} This data set contains traffic speed collected from 214 road segments over two months (from August 1 to September 30, 2016) with a 10-minute resolution (i.e., 144 time intervals per day) in Guangzhou, China. The prepared data is of size $214\times 8784$ in the form of multivariate time series matrix (or tensor of size $214\times 144\times 61$).
    \item \textbf{(H)}: Hangzhou metro passenger flow data set.\footnote{\url{https://tianchi.aliyun.com/competition/entrance/231708/information}} This data set provides incoming passenger flow of 80 metro stations over 25 days (from January 1 to January 25, 2019) with a 10-minute resolution in Hangzhou, China. We discard the interval 0:00 a.m. 6:00 a.m. with no services, and only consider the remaining 108 time intervals of a day. The prepared data is of size $80\times 2700$ in the form of multivariate time series (or tensor of size $80\times 108\times 25$).
    \item \textbf{(S)}: Seattle freeway traffic speed data set.\footnote{\url{https://github.com/zhiyongc/Seattle-Loop-Data}} This data set contains freeway traffic speed from 323 loop detectors with a 5-minute resolution (i.e., 288 time intervals per day) over the first four weeks of January, 2015 in Seattle, USA. The prepared data is of size $323\times 8064$ in the form of multivariate time series (or tensor of size $323\times 288\times 28$).
    \item \textbf{(P)}: Portland highway traffic volume data set.\footnote{\url{https://portal.its.pdx.edu/home}} This data set is collected from highways in the Portland-Vancouver Metropolitan region, which contains traffic volume from 1156 loop detectors with a 15-minute resolution (i.e., 96 time intervals per day) in January, 2021. The prepared data is of size $1156\times 2976$ in the form of multivariate time series matrix (or tensor of size $1156\times 96\times 31$).
\end{itemize}

Note that the adapted data sets and Python codes for our experiments are available on Github.\footnote{\url{https://github.com/xinychen/transdim}}

\subsection{Missing Data Generation}

To evaluate the performance of LATC for missing traffic data imputation thoroughly, we take into account three missing data patterns as shown in Fig.~\ref{missing_patterns}, i.e., random missing (RM), non-random missing (NM), and blackout missing (BM). RM and NM data are generated by referring to our prior work \cite{chen2019abayesian}. According to the mechanism of RM and NM data, we mask certain amount of observations as missing values (e.g., 30\%, 70\%, 90\%), and the remaining partial observations are input data for learning a well-behaved model. BM pattern is different from RM and NM patterns, which masks observations of all spatial sensors/locations as missing values with certain window length. BM is a challenging scenario with complete column-wise missing. We set the missing rate in the following experiments to 30\%.

\begin{figure}[!t]
\begin{center}
\subfigure[Random missing (RM).]{
\resizebox{6.5cm}{!}{
\begin{tikzpicture}[domain=0:110]
\begin{axis}[
  height=3cm,
  width=7cm,
  axis x line=center,
  axis y line=center,
  xtick={15,30,45,60,75,90,105},
  ytick={0},
  xticklabels={\scriptsize$I$,\scriptsize$2I$,\scriptsize$3I$,\scriptsize$4I$,\scriptsize$5I$,\scriptsize$6I$,\scriptsize$7I$},
  yticklabels={},
  xlabel style={below left},
  ylabel style={below right},
  xmin=1,
  xmax=115,
  ymin=0,
  ymax=60]
\addplot[no marks,smooth,draw=cyan!60!blue,thick] file {speed1.data};
\addplot+[only marks,
    mark=*,
    mark options={scale=0.9,fill=white},
    draw=gray!60,thick] plot coordinates{(6,36.90) (13,24.89) (25,39.41) (29,39.85) (55,32.57) (76,41.60) (83,34.34) (89,33.23) (95,43.76)};
\addplot[no marks,smooth,draw=orange!60!black,thick] file {speedr1.data};
\addplot+[only marks,
    mark=*,
    mark options={scale=0.9,fill=white},
    draw=gray!60,thick] plot coordinates{(14,29) (19,34) (39,24) (45,27) (70,21) (83,22)};
\end{axis}
\end{tikzpicture}}
}

\vspace{-0.5em}

\subfigure[Non-random missing (NM).]{
\resizebox{6.5cm}{!}{
\begin{tikzpicture}[domain=0:110]
\begin{axis}[
  height=3cm,
  width=7cm,
  axis x line=center,
  axis y line=center,
  xtick={15,30,45,60,75,90,105},
  ytick={0},
  xticklabels={\scriptsize$I$,\scriptsize$2I$,\scriptsize$3I$,\scriptsize$4I$,\scriptsize$5I$,\scriptsize$6I$,\scriptsize$7I$},
  yticklabels={},
  xlabel style={below left},
  ylabel style={below right},
  xmin=1,
  xmax=115,
  ymin=0,
  ymax=60]
\addplot[no marks,smooth,draw=gray!30,thick] file {speedn1.data};
\addplot[no marks,smooth,draw=cyan!60!blue,thick] file {speedn2.data};
\addplot[no marks,smooth,draw=gray!30,thick] file {speedn3.data};
\addplot[no marks,smooth,draw=cyan!60!blue,thick] file {speedn4.data};
\addplot[no marks,smooth,draw=cyan!60!blue,thick] file {speedn5.data};
\addplot[no marks,smooth,draw=gray!30,thick] file {speedn6.data};
\addplot[no marks,smooth,draw=cyan!60!blue,thick] file {speedn7.data};
\addplot[no marks,smooth,draw=orange!60!black,thick] file {speedn01.data};
\addplot[no marks,smooth,draw=gray!30,thick] file {speedn02.data};
\addplot[no marks,smooth,draw=orange!60!black,thick] file {speedn03.data};
\end{axis}
\end{tikzpicture}}}

\vspace{-0.5em}

\subfigure[Blackout missing (BM).]{
\resizebox{6.5cm}{!}{
\begin{tikzpicture}[domain=0:110]
\begin{axis}[
  height=3cm,
  width=7cm,
  axis x line=center,
  axis y line=center,
  xtick={15,30,45,60,75,90,105},
  ytick={0},
  xticklabels={\scriptsize$I$,\scriptsize$2I$,\scriptsize$3I$,\scriptsize$4I$,\scriptsize$5I$,\scriptsize$6I$,\scriptsize$7I$},
  yticklabels={},
  xlabel style={below left},
  ylabel style={below right},
  xmin=1,
  xmax=115,
  ymin=0,
  ymax=60]
\addplot[no marks,smooth,draw=cyan!60!blue,thick] file {speed1.data};
\addplot[no marks,smooth,draw=gray!50,thick] file {speedb2.data};
\addplot[no marks,smooth,draw=gray!50,thick] file {speedb3.data};
\addplot[no marks,smooth,draw=gray!50,thick] file {speedb4.data};
\addplot[no marks,smooth,draw=orange!60!black,thick] file {speedr1.data};
\addplot[no marks,smooth,draw=gray!30,thick] file {speedb02.data};
\addplot[no marks,smooth,draw=gray!30,thick,thick] file {speedb03.data};
\addplot[no marks,smooth,draw=gray!30,thick,thick] file {speedb04.data};
\end{axis}
\end{tikzpicture}}}
\end{center}
\vspace{-1em}
\caption{Illustration of three missing data patterns for spatiotemporal traffic data (e.g., traffic speed). Each time series represent the collected data from a given sensor. In these graphics, two curves correspond to two different time series. (a) Data are missing at random. Small circles indicate the missing values. (b) Data are missing continuously during a few time periods. Segments in gray indicate missing values. (c) No sensors are available (i.e., blackout) over a certain time window. }
\label{missing_patterns}
\end{figure}
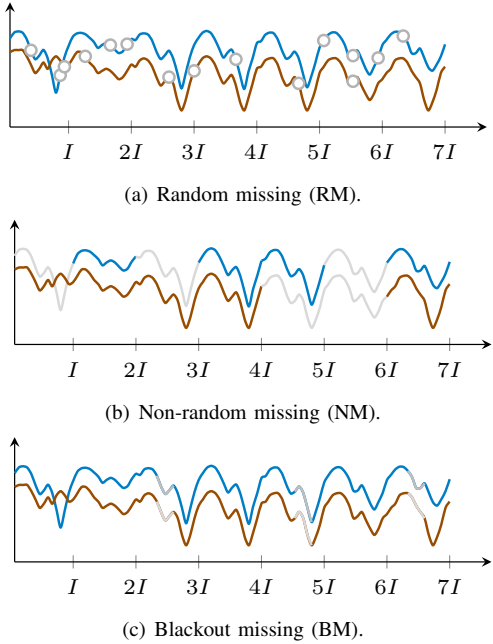

To assess the imputation performance, we use the actual values of the masked missing entries as the ground truth to compute  MAPE and RMSE:
\begin{equation}
\begin{split}
    \text{MAPE}&=\frac{1}{n}\sum_{i=1}^{n}\left|\frac{y_i-\hat{y}_i}{y_i}\right|\times 100, \\
    \text{RMSE}&=\sqrt{\frac{1}{n}\sum_{i=1}^{n}(y_i-\hat{y}_{i})^2},
\end{split}
\end{equation}
where $y_i$ and $\hat{y}_{i}$ are actual values and imputed values, respectively.

\subsection{Baseline Models}

For comparison, we take into account the following baseline:
\begin{itemize}
    \item Low-Rank Autoregressive Matrix Completion (LAMC). This is a matrix-form variant of the LATC model.
    \item Low-Rank Tensor Completion with Truncation Nuclear Norm minimization (LRTC-TNN, \cite{chen2020anonconvex}). This is a low-rank completion model in which truncated nuclear norm minimization can help maintain the most important low-rank patterns. Since the truncation in LRTC-TNN is a defined as a rate parameter, we adapt LRTC-TNN to use integer truncation in order to make it consistent with LATC.
    \item Bayesian Temporal Matrix Factorization (BTMF, \cite{chen2021bayesian}). This is a fully Bayesian temporal factorization framework which builds the correlation of temporal dynamics on latent factors by vector autoregressive process. Due to the temporal modeling, it outperforms the standard matrix factorization in the missing data imputation tasks\cite{chen2021bayesian}.
    \item Smooth PARAFAC Tensor Completion (SPC, \cite{yokota2016smooth}). This is a tensor decomposition based completion model with total variation smoothness constraints.
\end{itemize}

\begin{figure}[!t]
\begin{center}
\resizebox{9cm}{!}{
\begin{tikzpicture}
\pgfdeclareimage[height = 4.5cm]{snap1}{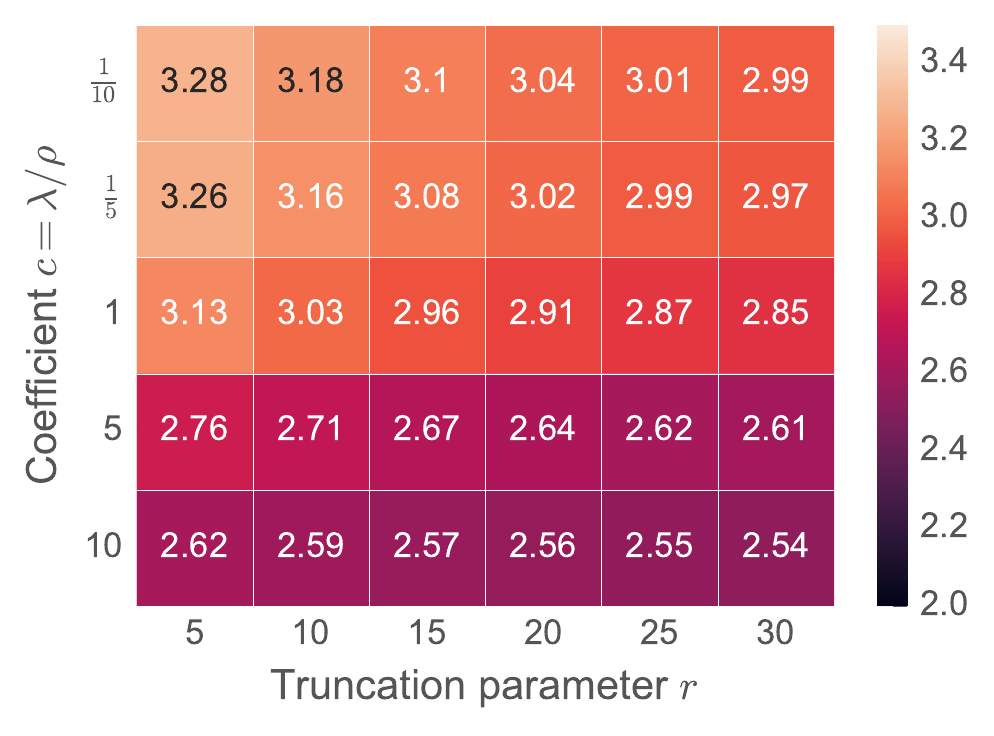}
\node (snap1) at (0, 0) {\pgfuseimage{snap1}};
\draw (0, -2.5) node {\large{(a) 30\%, RM.}};

\pgfdeclareimage[height = 4.5cm]{snap2}{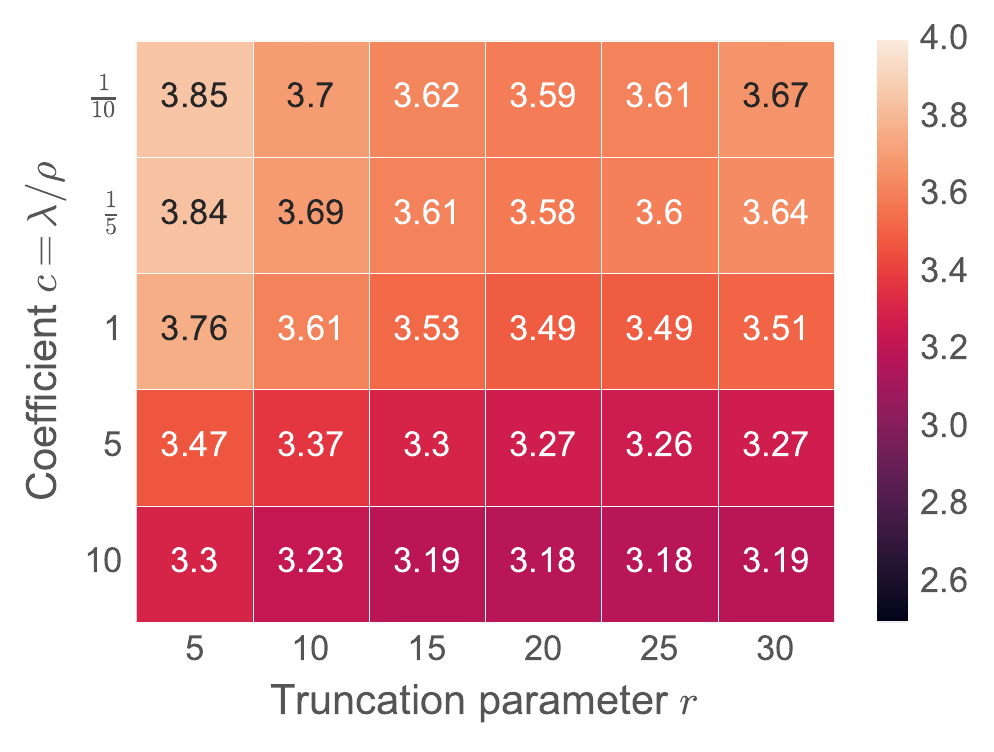}
\node (snap2) at (6, 0) {\pgfuseimage{snap2}};
\draw (6, -2.5) node {\large{(b) 70\%, RM.}};

\pgfdeclareimage[height = 4.5cm]{snap1}{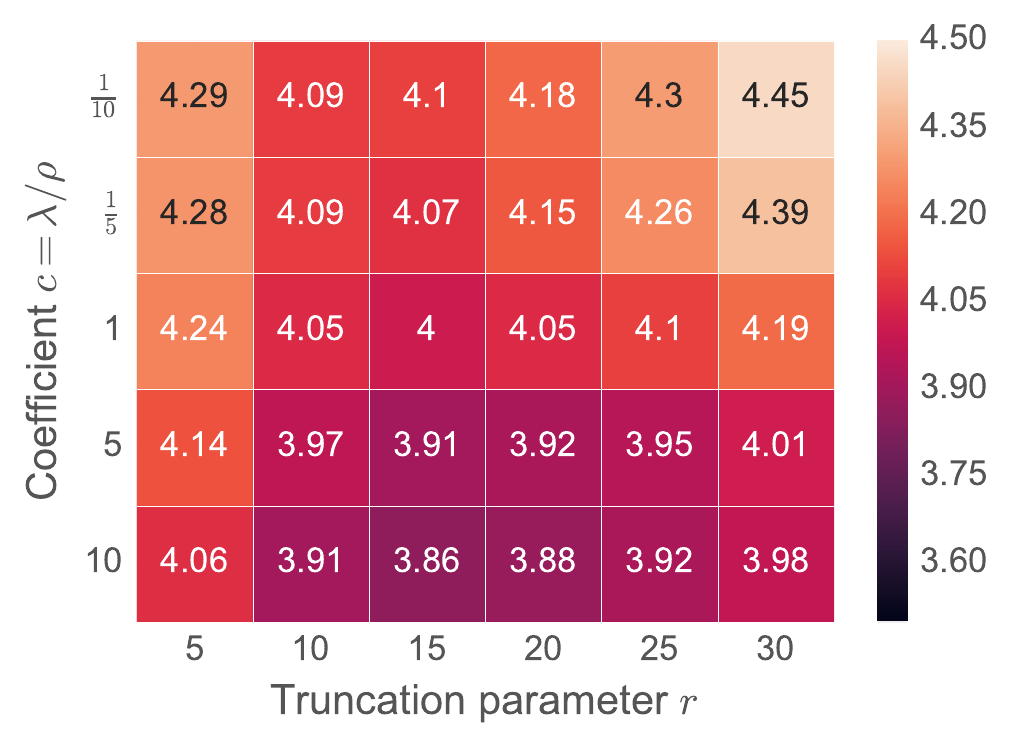}
\node (snap1) at (0, -5) {\pgfuseimage{snap1}};
\draw (0, -2.5-5) node {\large{(c) 90\%, RM.}};

\pgfdeclareimage[height = 4.5cm]{snap2}{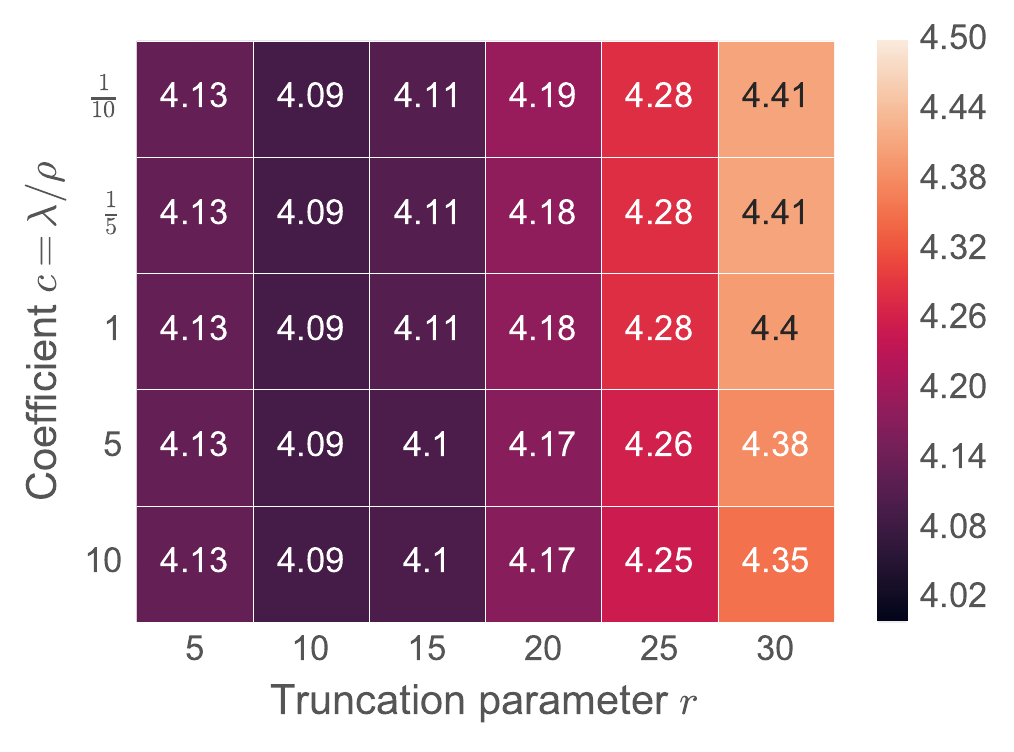}
\node (snap2) at (6, -5) {\pgfuseimage{snap2}};
\draw (6, -2.5-5) node {\large{(d) 30\%, NM.}};

\pgfdeclareimage[height = 4.5cm]{snap1}{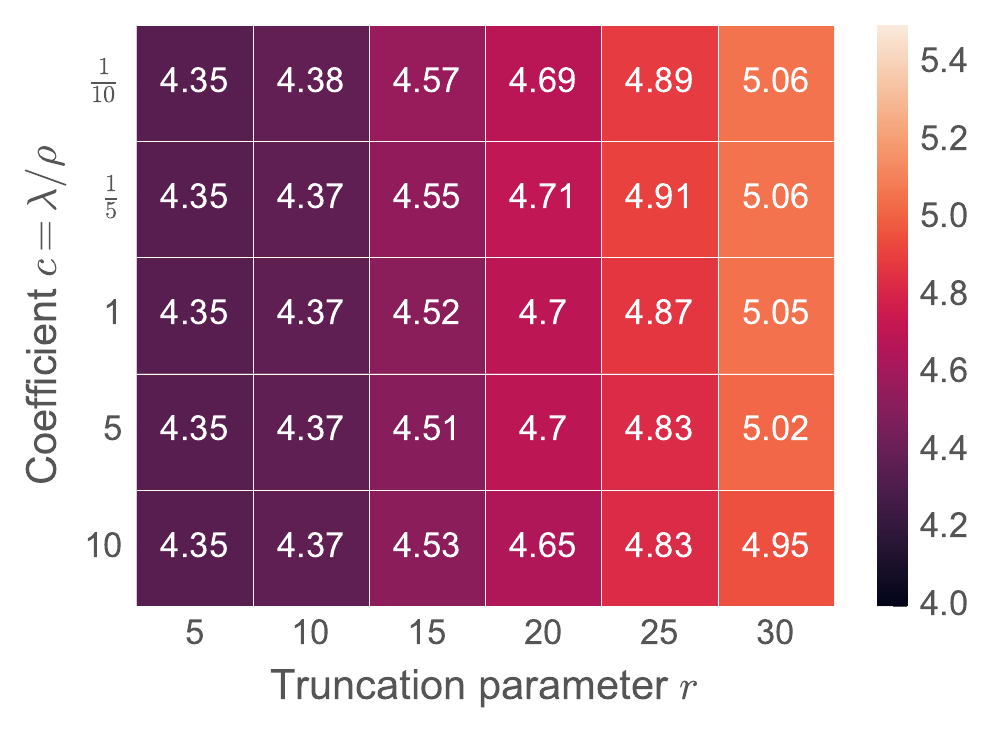}
\node (snap1) at (0, -10) {\pgfuseimage{snap1}};
\draw (0, -2.5-10) node {\large{(e) 70\%, NM.}};

\pgfdeclareimage[height = 4.5cm]{snap2}{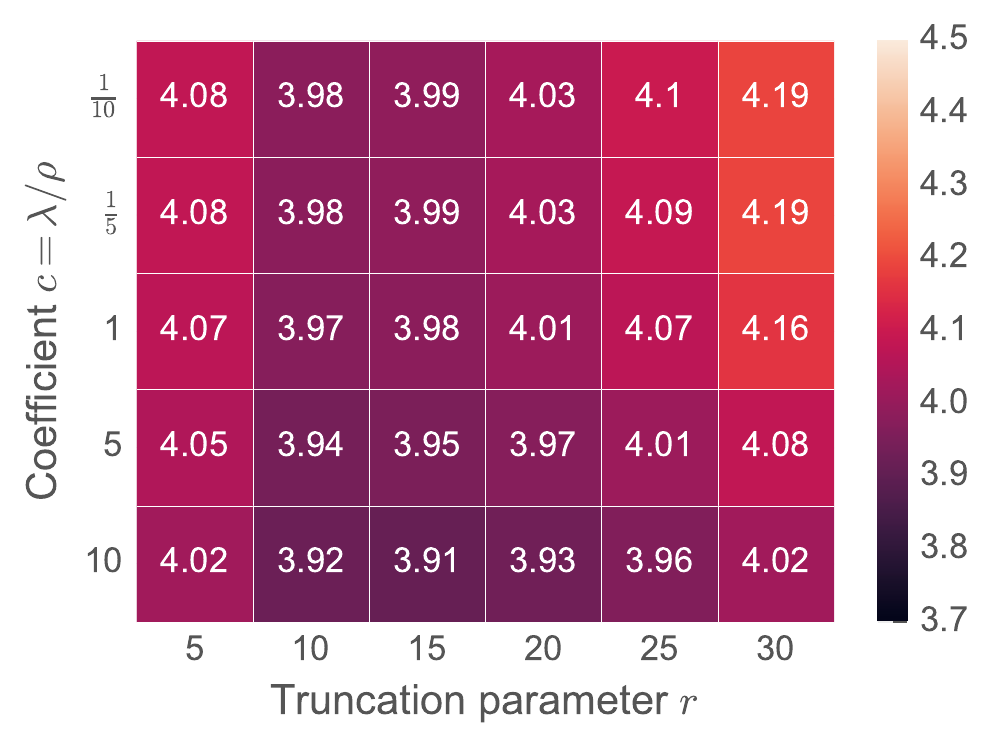}
\node (snap2) at (6, -10) {\pgfuseimage{snap2}};
\draw (6, -2.5-10) node {\large{(f) 30\%, BM.}};
\end{tikzpicture}}
\end{center}
\vspace{-0.5em}
\caption{RMSEs of LATC imputation on Guangzhou urban traffic speed data where $\rho=1\times10^{-4}$ for RM data and $\rho=1\times10^{-5}$ for NM/BM data. The smallest RMSE is achieved by: (a) $c=10,r=30$; (b) $c=10,r=20,25$; (c) $c=10,r=15$; (d) $r=10$; (e) $r=5$; (f) $c=10,r=15$.}
\label{rmse_heatmap_gdata}
\end{figure}


\subsection{Results}

There are several parameters in LATC, including learning rate $\rho$, weight parameter $\lambda$, truncation $r$, and time lag set $\mathcal{H}$. The most important parameters are the coefficient $c=\lambda/\rho$ and the truncation $r$. For other parameters including $\rho$ and time lag set $\mathcal{H}$, we conduct preliminary test for choosing them. $\rho$ is chosen from $\{1\times10^{-5},1\times10^{-4}\}$ for all data sets. To assess the sensitivity of the model over $c$ and $r$, we develop the following setting for our imputation experiments:
\begin{itemize}
    \item Time lag set is set as $\{1,2,\ldots,6\}$ for (G), (H), and (S) data, and $\{1,2,3,4\}$ for (P) data;
    \item $\lambda=c\cdot\rho$ where $c\in\{\frac{1}{10}\frac{1}{5},1,5,10\}$;
    \item $r\in\{5,10,15,20,25,30\}$ and $r<\min\{M,I,J\}$.
\end{itemize}


Fig.~\ref{rmse_heatmap_gdata} shows the heatmaps of imputation RMSE values achieved by LATC model on Guangzhou urban traffic speed data. It demonstrates that: 1) for RM and BM data, when $c=10$, LATC model achieves the best imputation performance and the truncation $r$ has little impact on the final results; 2) for NM data, the coefficient $c$ is less important than the truncation $r$. LATC model achieves the best performance when the truncation is a relatively small value (e.g., 5, 10). These results verifies the importance of temporal variation minimization for RM and BM imputation.

Fig.~\ref{rmse_heatmap_hdata} shows similar heatmaps for Hangzhou metro passenger flow data. It can be seen that: 1) for RM and BM data, when $c=1$, LATC model achieves the best imputation performance; 2) for NM data, LATC model achieves the best performance with small coefficient $c$ and truncation $r$ (e.g., 5).

\begin{figure}[!t]
\begin{center}
\resizebox{9cm}{!}{
\begin{tikzpicture}
\pgfdeclareimage[height = 4.5cm]{snap1}{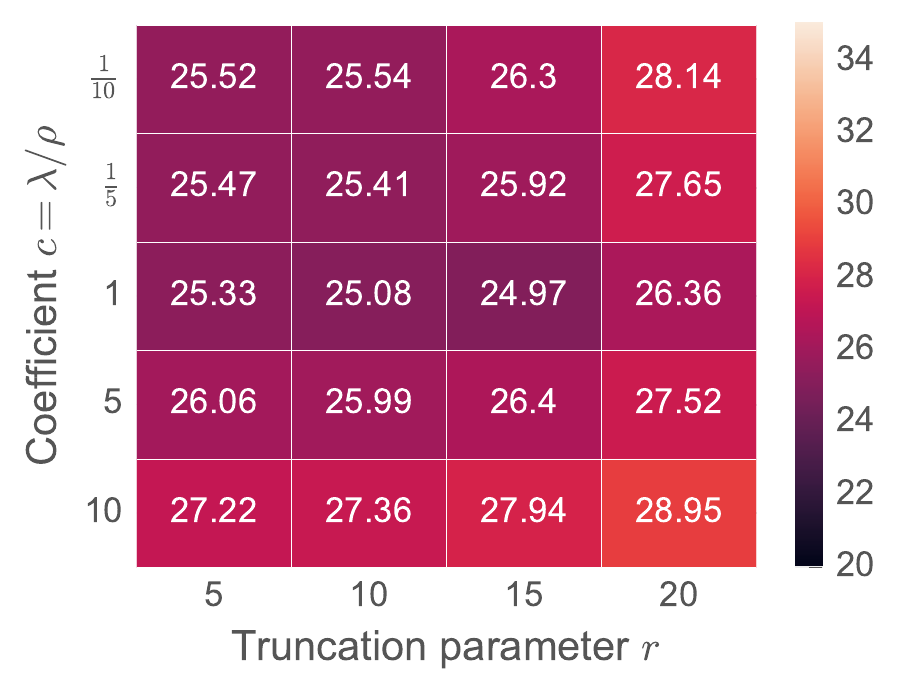}
\node (snap1) at (0, 0) {\pgfuseimage{snap1}};
\draw (0, -2.5) node {\large{(a) 30\%, RM.}};

\pgfdeclareimage[height = 4.5cm]{snap2}{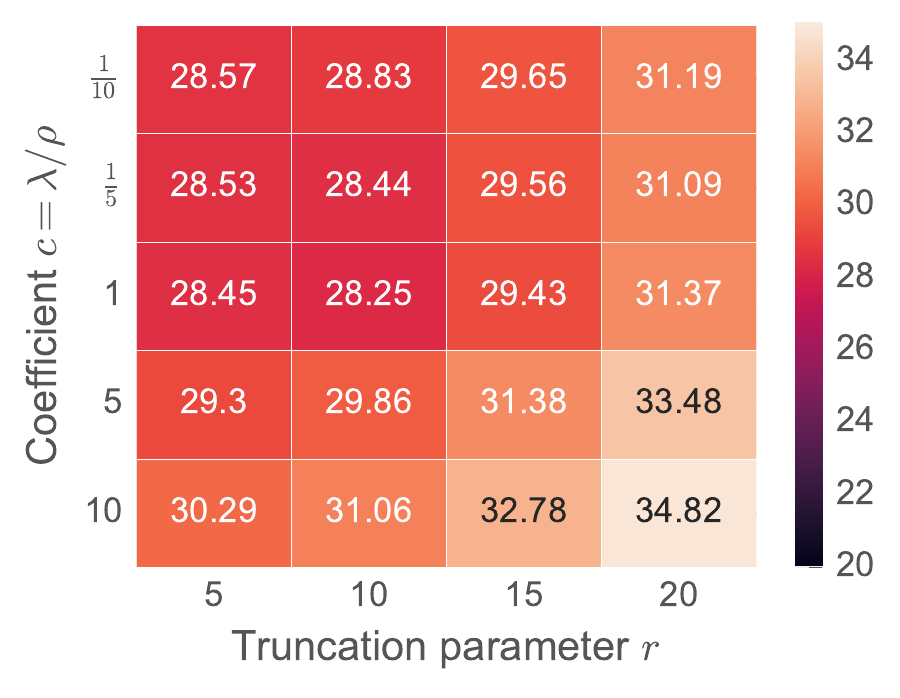}
\node (snap2) at (6, 0) {\pgfuseimage{snap2}};
\draw (6, -2.5) node {\large{(b) 70\%, RM.}};

\pgfdeclareimage[height = 4.5cm]{snap1}{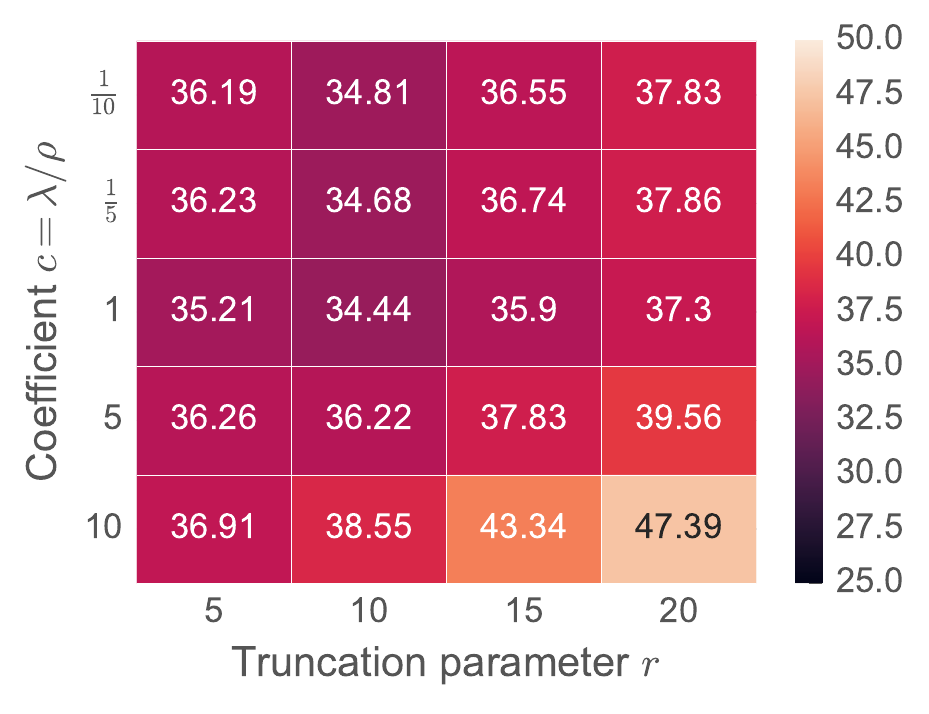}
\node (snap1) at (0, -5) {\pgfuseimage{snap1}};
\draw (0, -2.5-5) node {\large{(c) 90\%, RM.}};

\pgfdeclareimage[height = 4.5cm]{snap2}{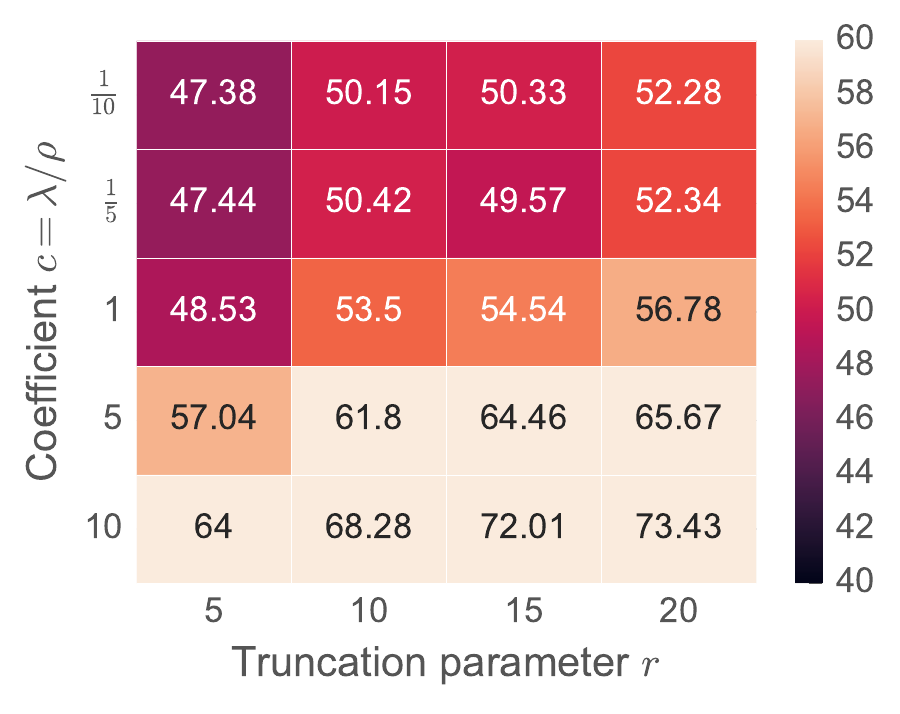}
\node (snap2) at (6, -5) {\pgfuseimage{snap2}};
\draw (6, -2.5-5) node {\large{(d) 30\%, NM.}};

\pgfdeclareimage[height = 4.5cm]{snap1}{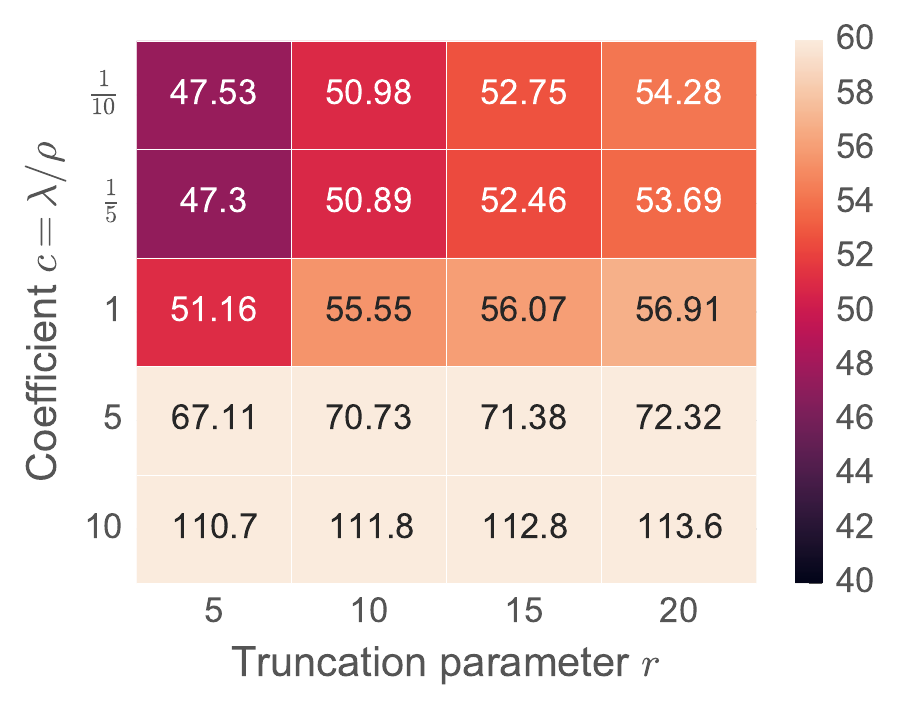}
\node (snap1) at (0, -10) {\pgfuseimage{snap1}};
\draw (0, -2.5-10) node {\large{(e) 70\%, NM.}};

\pgfdeclareimage[height = 4.5cm]{snap2}{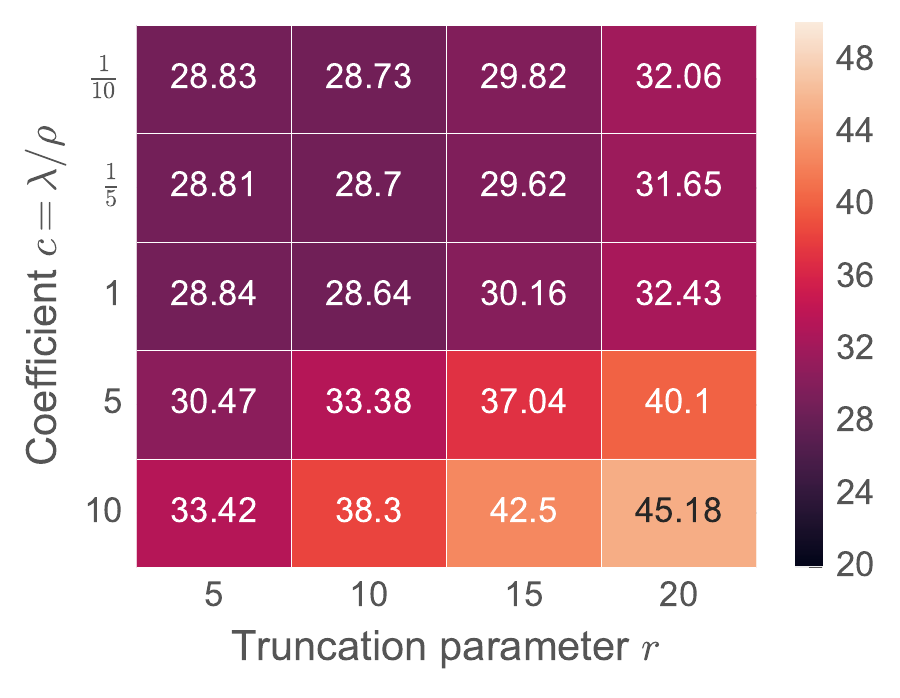}
\node (snap2) at (6, -10) {\pgfuseimage{snap2}};
\draw (6, -2.5-10) node {\large{(f) 30\%, BM.}};
\end{tikzpicture}}
\end{center}
\vspace{-0.5em}
\caption{RMSEs of LATC imputation on Hangzhou metro passenger flow data ($\rho=1\times10^{-5}$). The smallest RMSE is achieved by: (a) $c=1,r=15$; (b-c) $c=1,r=10$; (d) $c=\frac{1}{10},r=5$; (e) $c=\frac{1}{5},r=5$; (f) $c=1,r=10$.}
\label{rmse_heatmap_hdata}
\end{figure}


By testing the LATC model in the similar way, it can indicate the importance of temporal variation on other two data sets. On Seattle freeway traffic speed data, we observe that the coefficient $c$ has little impact on the final imputation for the RM and NM data. However, there show the positive influence of temporal variation in LATC for BM data. On Portland highway traffic volume data, a relatively large coefficient $c$ (e.g., 5 and 10) can make the model less sensitive to the various truncation values for RM and BM data.

As mentioned above, despite the truncated nuclear norm built on tensor, the results also show the advantage of temporal variation built on the multivariate time series matrix. Due to the temporal modeling, temporal variation can improve the imputation performance for missing traffic data imputation. Table~\ref{table1} shows the overall imputation performance of LATC and baseline models on the four selected traffic data sets with various missing scenarios. Of these results, NM and BM data seem to be more difficult to reconstruct with all these imputation models than RM data. In most cases, LATC outperforms other baseline models. Comparing LATC with LAMC shows the advantage of tensor structure, i.e., LATC with tensor structure performs better than LAMC with matrix structure. Comparing LATC with LRTC-TNN shows the advantage of temporal variation, i.e., temporal modeling with autoregressive process has positive influence for improving the imputation performance. For volume data sets (H) and (P), the relative errors are quite high because some volume values are close to 0 or relatively small and estimating these values would accumulate relatively large relative errors.

Figs.~\ref{Hangzhou-imputation-plot}, \ref{seattle-imputation-plot}, and \ref{portland-imputation-plot} show some imputation examples with different missing scenarios that achieved by LATC. In these examples, we can see explicit temporal dependencies underlying traffic time series data. For all missing scenarios, LATC can achieve accurate imputation and learn the true signals from observations even with severe missing data (e.g., NM/BM data). In Fig.~\ref{Hangzhou-imputation-plot}, it shows that the time series signal of passenger flow is not complex. By referring to Table~\ref{table1}, we can see that LRTC-TNN without temporal variation outperforms the proposed LATC model on Hangzhou metro passenger flow data, and this demonstrates that not all multivariate time series imputation cases require temporal modeling, for some cases that the signal does not show strong temporal dependencies, purely low-rank model can also provide accurate imputation.

\begin{table*}[!ht]
\caption{Performance comparison (in MAPE/RMSE) of LATC and baseline models for RM, NM, and BM data imputation.}
\label{table1}
\centering
\footnotesize
\begin{tabular}{ll|rrrrrrrr}
\toprule
Data & Missing & LATC & LAMC & LRTC-TNN & BTMF & SPC \\
\midrule
\multirow{6}{*}{(G)} & 30\%, RM & \textbf{5.71}/\textbf{2.54} & 9.51/4.04 & 6.99/3.00 & 7.54/3.27 & 7.37/5.06 \\
& 70\%, RM & \textbf{7.22}/\textbf{3.18} & 10.40/4.37 & 8.38/3.59 & 8.75/3.73 & 8.91/4.44 \\
& 90\%, RM & \textbf{9.11}/\textbf{3.86} & 11.65/4.79 & 9.55/4.05 & 10.02/4.21 & 10.60/4.85 \\
& 30\%, NM & 9.63/4.09 & 10.11/4.23 & 9.61/\textbf{4.07} & 10.32/4.33 & \textbf{9.13}/5.29 \\
& 70\%, NM & 10.37/4.35 & 11.15/4.60 & \textbf{10.36}/\textbf{4.34} & 11.36/4.85 & 11.15/5.17 \\
& 30\%, BM-6 & \textbf{9.23}/\textbf{3.91} & 12.15/5.17 & 9.45/3.97 & 12.43/7.04 & 11.14/5.13 \\
\midrule
\multirow{6}{*}{(H)} & 30\%, RM & 19.12/24.97 & 22.65/42.94 & \textbf{18.87}/\textbf{24.90} & 22.37/28.66 & 19.82/26.21 \\
& 70\%, RM & 20.25/28.25 & 25.30/51.26 & \textbf{20.07}/\textbf{28.13} & 25.65/32.23 & 21.02/31.91 \\
& 90\%, RM & 24.32/\textbf{34.44} & 32.30/66.13 & \textbf{23.46}/35.84 & 31.51/46.24 & 24.97/49.68 \\
& 30\%, NM & \textbf{19.93}/\textbf{47.38} & 22.93/67.08 & 19.94/50.12 & 25.61/77.00 & 27.46/68.56 \\
& 70\%, NM & 24.30/47.30 & 29.23/63.95 & \textbf{23.88}/\textbf{45.06} & 34.50/70.11 & 46.86/98.81 \\
& 30\%, BM-6 & 21.93/28.64 & 30.78/66.03 & \textbf{21.40}/\textbf{27.83} & 52.15/57.61 & 22.49/37.53 \\
\midrule
\multirow{6}{*}{(S)} & 30\%, RM & \textbf{4.90}/\textbf{3.16} & 5.98/3.73 & 4.99/3.20 & 5.91/3.72 & 5.92/3.62 \\
& 70\%, RM & \textbf{5.96}/\textbf{3.71} & 8.02/4.70 & 6.10/3.77 & 6.47/3.98 & 7.38/4.30 \\
& 90\%, RM & \textbf{7.47}/\textbf{4.51} & 10.56/5.91 & 8.08/4.80 & 8.17/4.81 & 9.75/5.31 \\
& 30\%, NM & 7.11/4.33 & 6.99/4.25 & \textbf{6.85}/\textbf{4.21} & 9.26/5.36 & 8.87/4.99 \\
& 70\%, NM & 9.46/5.42 & 9.75/5.60 & \textbf{9.23}/\textbf{5.35} & 10.47/6.15 & 11.32/5.92 \\
& 30\%, BM-12 & \textbf{9.44}/\textbf{5.36} & 27.05/13.66 & 9.52/5.41 & 14.33/13.60 & 11.30/5.84 \\
\midrule
\multirow{6}{*}{(P)} & 30\%, RM & 17.46/\textbf{15.89} & 17.93/16.03 & \textbf{17.27}/16.08 & 18.22/19.14 & 21.29/56.73 \\
& 70\%, RM & \textbf{19.56}/\textbf{18.70} & 21.26/19.37 & 19.99/18.73 & 19.96/22.21 & 24.35/43.32 \\
& 90\%, RM & 23.47/22.74 & 25.64/23.75 & \textbf{22.90}/\textbf{22.68} & 23.90/25.71 & 28.45/39.65 \\
& 30\%, NM & \textbf{18.90}/\textbf{18.84} & 19.93/19.69 & 19.59/18.91 & 19.55/20.38 & 26.96/60.33 \\
& 70\%, NM & 24.67/31.74 & 25.75/28.25 & 30.26/60.85 & \textbf{23.86}/\textbf{26.74} & 33.42/47.34\\
& 30\%, BM-4 & \textbf{24.04}/\textbf{23.52} & 29.21/27.60 & 31.74/74.42 & 27.85/25.68 & 31.01/60.33 \\
\bottomrule
\multicolumn{10}{l}{{Best results are highlighted in bold fonts. The number next to the BM denotes the window length.}}
\end{tabular}
\end{table*}

\begin{figure*}[!t]
\centering
\includegraphics[width=0.9\textwidth]{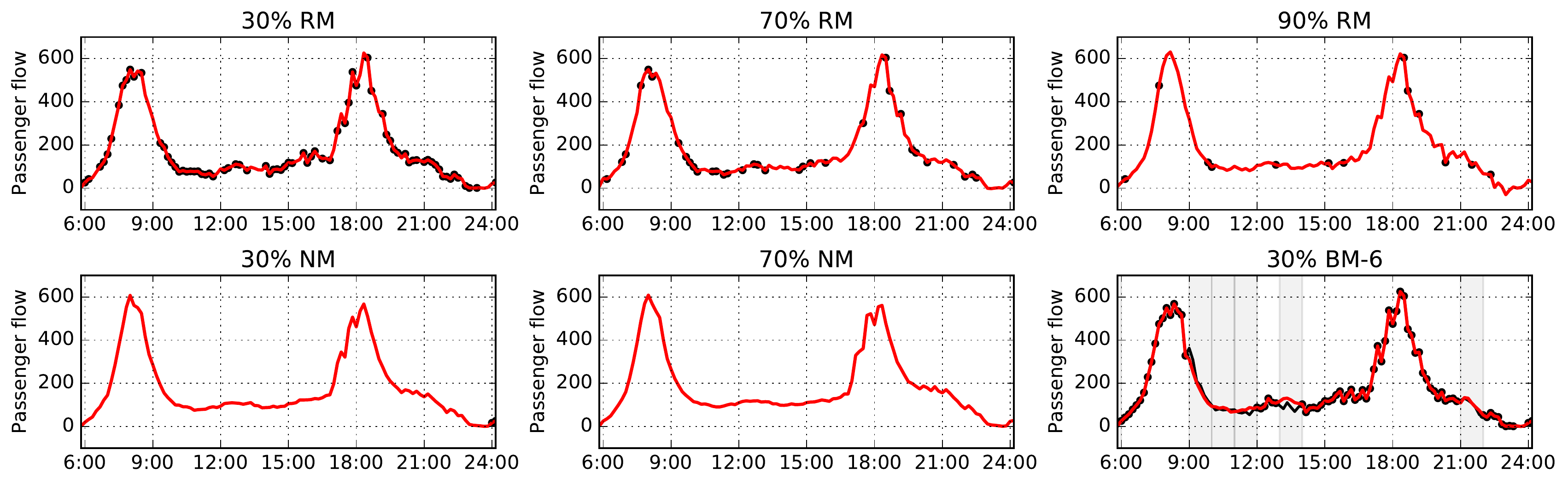}
\caption{Imputed values by LATC for Hangzhou metro passenger flow data. This example corresponds to  metro station \#3 and the 4th day of the data set. Black dots/curves indicate the partially observed data, gray rectangles indicate blackout missing, while red curves indicate the imputed values.}
\label{Hangzhou-imputation-plot}
\end{figure*}

\begin{figure*}[!t]
\centering
\includegraphics[width=0.9\textwidth]{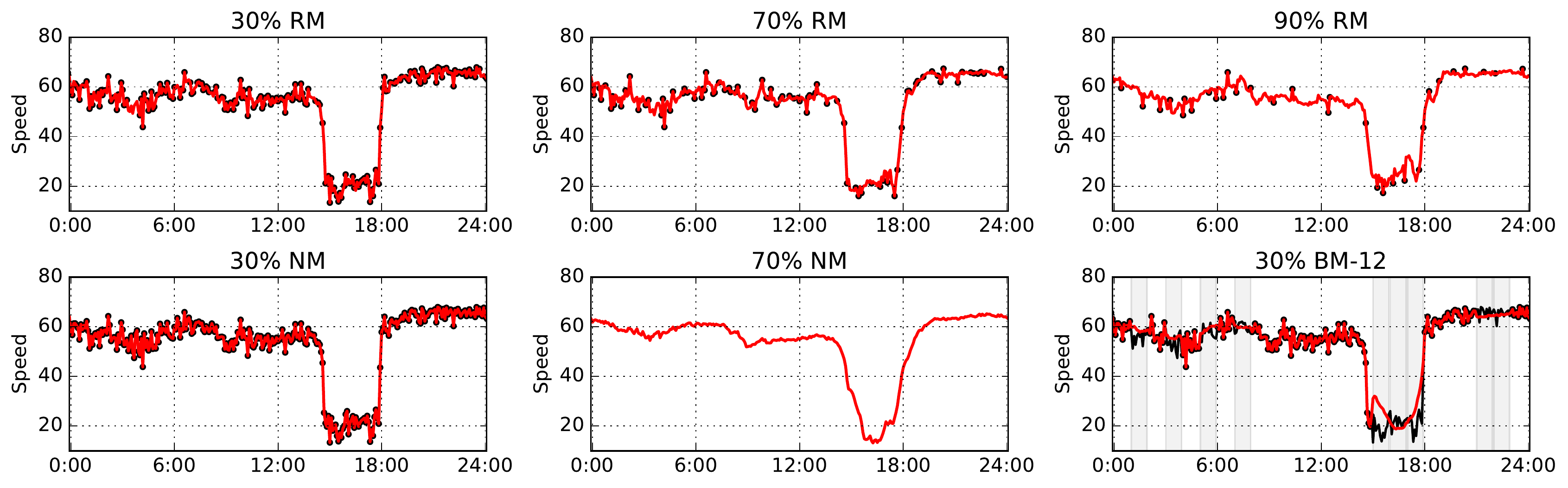}
\caption{Imputed values by LATC for Seattle freeway traffic speed data. This example corresponds to  detector \#3 and the 7th day of the data set.}
\label{seattle-imputation-plot}
\end{figure*}

\begin{figure*}[!ht]
\centering
\includegraphics[width=0.9\textwidth]{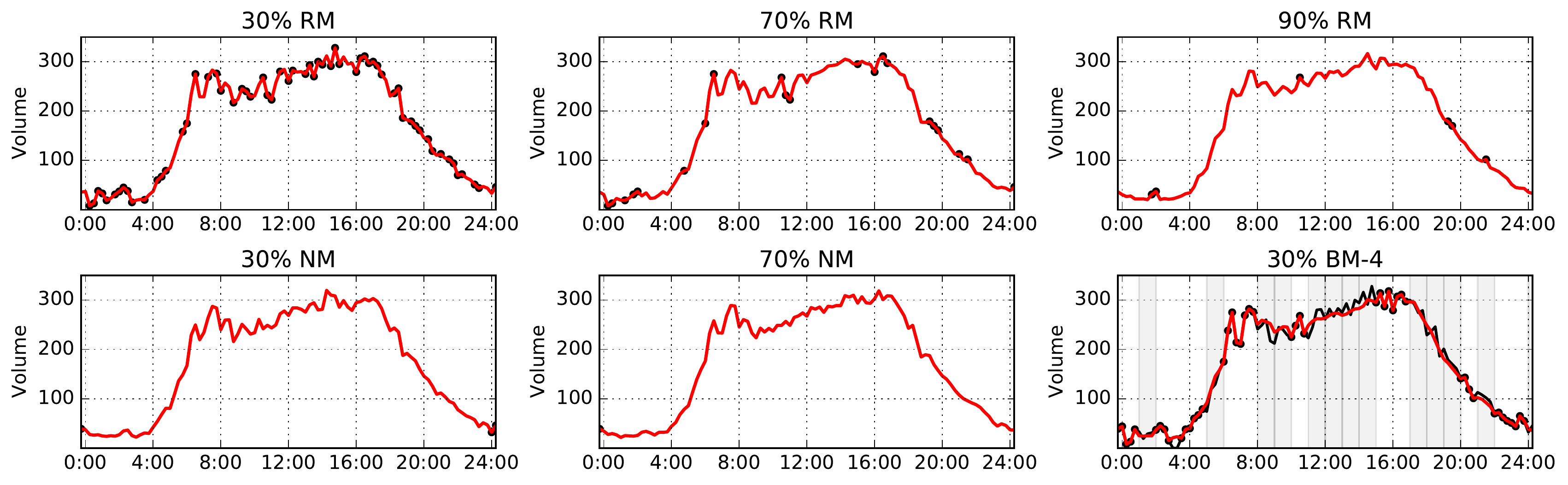}
\caption{Imputed values by LATC for Portland traffic volume data. This example corresponds to  detector  \#3 and the 8th day of the data set.}
\label{portland-imputation-plot}
\end{figure*}

\section{Conclusion}\label{conclusion}

Spatiotemporal traffic data imputation is of great significance in data-driven intelligent transportation systems. Fortunately, for analyzing and modeling traffic data, there are some fundamental features such as low-rank properties and temporal dynamics that can be taken into account. In this work, the proposed LATC model builds both low-rank structure (i.e., truncated nuclear norm) and time series autoregressive process on certain data representations. By doing so, numerical experiments on some real-world traffic data sets show the advantages of LATC over other low-rank models.
In addition to the imputation capability of LATC, LATC can also be applied to spatiotemporal traffic forecasting in the presence of missing values.

\appendices
\section{Supplementary theorem}\label{supp_theorem}

\begin{theorem}\label{theorem_ar}
Suppose $\boldsymbol{\Phi}_{0}\in\mathbb{R}^{(T-h_d)\times T}$, $\boldsymbol{\Phi}\in\mathbb{R}^{(T-h_d)\times (dT)}$, and autoregressive coefficient $\boldsymbol{A}\in\mathbb{R}^{M\times d}$ as defined in Lemma~\ref{lemma_ar}, then an optimal solution to the problem
\begin{equation*}\label{minimize_z_variable}
    \min_{\boldsymbol{Z}}~\frac{1}{2}\left\|\boldsymbol{\Phi}_{0}\boldsymbol{Z}^\top-\boldsymbol{\Phi}(\boldsymbol{A}^\top\odot\boldsymbol{Z}^\top)\right\|_{F}^{2}+\frac{\alpha}{2}\left\|\boldsymbol{Z}-\boldsymbol{X}\right\|_{F}^{2},
\end{equation*}
is given by
\begin{equation*}\label{solution_to_z_matrix}
    \operatorname{vec}(\boldsymbol{Z}^\top):=\alpha[(\boldsymbol{B}-\boldsymbol{C})^\top(\boldsymbol{B}-\boldsymbol{C})+\alpha\boldsymbol{I}_{MT}]^{-1}\cdot\operatorname{vec}(\boldsymbol{X}^\top),
\end{equation*}
where $\boldsymbol{B}=(\boldsymbol{I}_{M}\otimes\boldsymbol{\Phi}_0)$ and $\boldsymbol{C}=(\boldsymbol{I}_{M}\otimes \boldsymbol{\Phi})[(\boldsymbol{I}_{M}\odot\boldsymbol{A}^\top)\otimes\boldsymbol{I}_{T}]$. $\otimes$ denotes the Kronecker product.

\end{theorem}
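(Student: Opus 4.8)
The plan is to vectorize the entire objective so that it collapses to a standard strictly convex quadratic program in the single vector variable $\operatorname{vec}(\boldsymbol{Z}^\top)$, and then read off the minimizer from the normal equation. First I would handle the two Frobenius-norm terms separately. Since $\|\boldsymbol{Z}-\boldsymbol{X}\|_F=\|\boldsymbol{Z}^\top-\boldsymbol{X}^\top\|_F=\|\operatorname{vec}(\boldsymbol{Z}^\top)-\operatorname{vec}(\boldsymbol{X}^\top)\|_2$, the regularization term is already in the desired form. For the fitting term I would use the standard identity $\operatorname{vec}(\boldsymbol{P}\boldsymbol{Q})=(\boldsymbol{I}\otimes\boldsymbol{P})\operatorname{vec}(\boldsymbol{Q})$ to obtain $\operatorname{vec}(\boldsymbol{\Phi}_0\boldsymbol{Z}^\top)=(\boldsymbol{I}_M\otimes\boldsymbol{\Phi}_0)\operatorname{vec}(\boldsymbol{Z}^\top)=\boldsymbol{B}\operatorname{vec}(\boldsymbol{Z}^\top)$ and, likewise, $\operatorname{vec}(\boldsymbol{\Phi}(\boldsymbol{A}^\top\odot\boldsymbol{Z}^\top))=(\boldsymbol{I}_M\otimes\boldsymbol{\Phi})\operatorname{vec}(\boldsymbol{A}^\top\odot\boldsymbol{Z}^\top)$.

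The key step — and the one I expect to require the most bookkeeping — is to express $\operatorname{vec}(\boldsymbol{A}^\top\odot\boldsymbol{Z}^\top)$ as a linear map applied to $\operatorname{vec}(\boldsymbol{Z}^\top)$. The subtlety is that the $m$-th column of the Khatri--Rao product $\boldsymbol{A}^\top\odot\boldsymbol{Z}^\top$ is $\boldsymbol{a}_m\otimes\boldsymbol{z}_m$, whereas the $m$-th block of $\operatorname{vec}(\boldsymbol{Z}^\top)$ is simply $\boldsymbol{z}_m$, so the required matrix must insert the coefficients $\boldsymbol{a}_m$ block by block. I would verify by a direct block computation that $\boldsymbol{I}_M\odot\boldsymbol{A}^\top$ is the $(Md)\times M$ matrix whose only nonzero block in block-row $m$ is $\boldsymbol{a}_m$ (in block-column $m$), hence $(\boldsymbol{I}_M\odot\boldsymbol{A}^\top)\otimes\boldsymbol{I}_T$ has $m$-th diagonal block $\boldsymbol{a}_m\otimes\boldsymbol{I}_T$, which maps $\boldsymbol{z}_m$ to $\boldsymbol{a}_m\otimes\boldsymbol{z}_m$. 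This gives exactly $\operatorname{vec}(\boldsymbol{A}^\top\odot\boldsymbol{Z}^\top)=[(\boldsymbol{I}_M\odot\boldsymbol{A}^\top)\otimes\boldsymbol{I}_T]\operatorname{vec}(\boldsymbol{Z}^\top)$, and combining with the previous step yields $\operatorname{vec}(\boldsymbol{\Phi}(\boldsymbol{A}^\top\odot\boldsymbol{Z}^\top))=\boldsymbol{C}\operatorname{vec}(\boldsymbol{Z}^\top)$ with $\boldsymbol{C}$ as defined in the statement.

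Writing $\boldsymbol{z}=\operatorname{vec}(\boldsymbol{Z}^\top)$ and $\boldsymbol{x}=\operatorname{vec}(\boldsymbol{X}^\top)$, the objective then reads $\tfrac12\|(\boldsymbol{B}-\boldsymbol{C})\boldsymbol{z}\|_2^2+\tfrac{\alpha}{2}\|\boldsymbol{z}-\boldsymbol{x}\|_2^2$, an unconstrained quadratic whose Hessian is $(\boldsymbol{B}-\boldsymbol{C})^\top(\boldsymbol{B}-\boldsymbol{C})+\alpha\boldsymbol{I}_{MT}$; this is positive definite because $\alpha>0$, so the problem is strictly convex and admits a unique minimizer. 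Setting the gradient $(\boldsymbol{B}-\boldsymbol{C})^\top(\boldsymbol{B}-\boldsymbol{C})\boldsymbol{z}+\alpha(\boldsymbol{z}-\boldsymbol{x})$ to zero gives the normal equation $[(\boldsymbol{B}-\boldsymbol{C})^\top(\boldsymbol{B}-\boldsymbol{C})+\alpha\boldsymbol{I}_{MT}]\boldsymbol{z}=\alpha\boldsymbol{x}$, whose solution is precisely $\operatorname{vec}(\boldsymbol{Z}^\top)=\alpha[(\boldsymbol{B}-\boldsymbol{C})^\top(\boldsymbol{B}-\boldsymbol{C})+\alpha\boldsymbol{I}_{MT}]^{-1}\operatorname{vec}(\boldsymbol{X}^\top)$, as claimed. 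Apart from the Khatri--Rao/Kronecker identity above, every step is a routine application of vectorization identities and first-order optimality for a strongly convex quadratic, so no further difficulties are anticipated.
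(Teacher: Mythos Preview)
Your proposal is correct and follows essentially the same approach as the paper: vectorize both terms via $\operatorname{vec}(\boldsymbol{P}\boldsymbol{Q})=(\boldsymbol{I}\otimes\boldsymbol{P})\operatorname{vec}(\boldsymbol{Q})$, reduce to a quadratic in $\operatorname{vec}(\boldsymbol{Z}^\top)$, and solve the first-order optimality condition. If anything, you give more detail than the paper does, since you actually justify the Khatri--Rao identity $\operatorname{vec}(\boldsymbol{A}^\top\odot\boldsymbol{Z}^\top)=[(\boldsymbol{I}_M\odot\boldsymbol{A}^\top)\otimes\boldsymbol{I}_T]\operatorname{vec}(\boldsymbol{Z}^\top)$ via its block structure and explicitly note the strict convexity from $\alpha>0$, whereas the paper simply asserts both.
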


\begin{proof}

In this case, we can use vectorization:
\begin{equation*}
\begin{aligned}
    \operatorname{vec}(\boldsymbol{\Phi}_0\boldsymbol{Z}^\top)=&(\boldsymbol{I}_{M}\otimes\boldsymbol{\Phi}_0)\cdot\operatorname{vec}(\boldsymbol{Z}^\top), \\
    \operatorname{vec}(\boldsymbol{\Phi}(\boldsymbol{A}^\top\odot\boldsymbol{Z}^\top))=&(\boldsymbol{I}_{M}\otimes \boldsymbol{\Phi})\cdot\operatorname{vec}(\boldsymbol{A}^\top\odot\boldsymbol{Z}^\top) \\
    =&(\boldsymbol{I}_{M}\otimes \boldsymbol{\Phi})[(\boldsymbol{I}_{M}\odot\boldsymbol{A}^\top)\otimes\boldsymbol{I}_{T}]\cdot\operatorname{vec}(\boldsymbol{Z}^\top), \\
\end{aligned}
\end{equation*}
where $\operatorname{vec}(\cdot)$ denotes the vectorization operator for any given matrix. Denote by $f$ the objective of problem \eqref{minimize_z_variable}:
\begin{equation*}
\begin{aligned}
    f=&\frac{1}{2}\|(\boldsymbol{B}-\boldsymbol{C})\cdot\operatorname{vec}(\boldsymbol{Z}^\top)\|_{2}^{2} +\frac{\alpha}{2}\|\operatorname{vec}(\boldsymbol{Z}^\top)-\operatorname{vec}(\boldsymbol{X}^\top)\|_{2}^{2}. \\
\end{aligned}
\end{equation*}
By letting 
\begin{equation*}
\begin{aligned}
    \frac{df}{d\operatorname{vec}(\boldsymbol{Z}^\top)}=&(\boldsymbol{B}-\boldsymbol{C})^\top(\boldsymbol{B}-\boldsymbol{C})\operatorname{vec}(\boldsymbol{Z}^\top) \\
    &+\alpha[\operatorname{vec}(\boldsymbol{Z}^\top)-\operatorname{vec}(\boldsymbol{X}^\top)]=\boldsymbol{0}, \\
\end{aligned}
\end{equation*}
we have
\begin{equation*}
\begin{aligned}
    \operatorname{vec}(\boldsymbol{Z}^\top)=&\alpha[(\boldsymbol{B}-\boldsymbol{C})^\top(\boldsymbol{B}-\boldsymbol{C})+\alpha\boldsymbol{I}_{MT}]^{-1}\cdot\operatorname{vec}(\boldsymbol{X}^\top). \\
\end{aligned}
\end{equation*}

\end{proof}

\section*{Acknowledgement}

This research is supported by the Natural Sciences and Engineering Research Council (NSERC) of Canada, the Fonds de recherche du Quebec – Nature et technologies (FRQNT), and the Canada Foundation for Innovation (CFI). X. Chen and M. Lei would  like  to  thank  the  Institute  for  Data  Valorisation (IVADO) for providing the PhD Excellence Scholarship to support this study.


%





\ifCLASSOPTIONcaptionsoff
  \newpage
\fi



%



\bibliographystyle{IEEEtran}
\bibliography{tensor}

%


\begin{IEEEbiography}[{\includegraphics[width=1in,height=1.25in,clip,keepaspectratio]{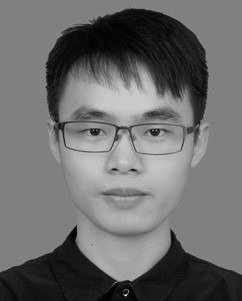}}]{Xinyu Chen} received the B.S. degree in Traffic Engineering from Guangzhou University, Guangzhou, China, in 2016, and M.S. degree in Transportation Information Engineering \& Control from Sun Yat-Sen University, Guangzhou, China, in 2019. He is currently a PhD student with the Civil, Geological and  Mining Engineering Department at Polytechnique Montreal, Montreal, QC, Canada. His current research centers on machine learning, spatiotemporal data modeling, and intelligent transportation systems.
\end{IEEEbiography}

\begin{IEEEbiography}[{\includegraphics[width=1in,height=1.1in]{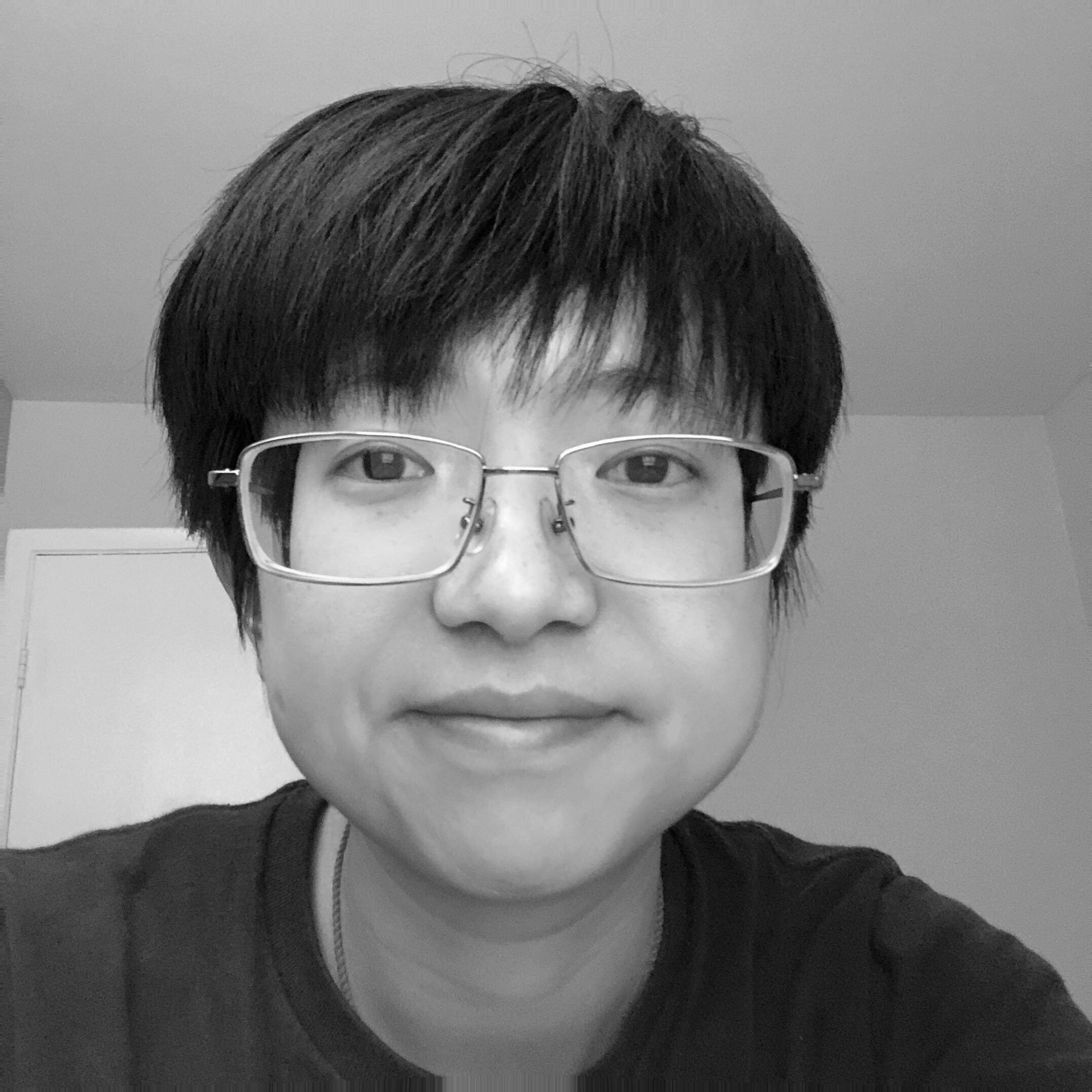}}]{Mengying Lei} received the B.S. degree in automation from Huazhong Agricultural University, in 2016, and the M.S. degree from the school of automation science and electrical engineering, Beihang University, Beijing, China, in 2019. She is now a Ph.D. student with the Department of Civil Engineering at McGill University, Montreal, Quebec, Canada. Her research currently focuses on spatiotemporal data modelling and intelligent transportation systems.
\end{IEEEbiography}

\begin{IEEEbiography}[{\includegraphics[width=1in]{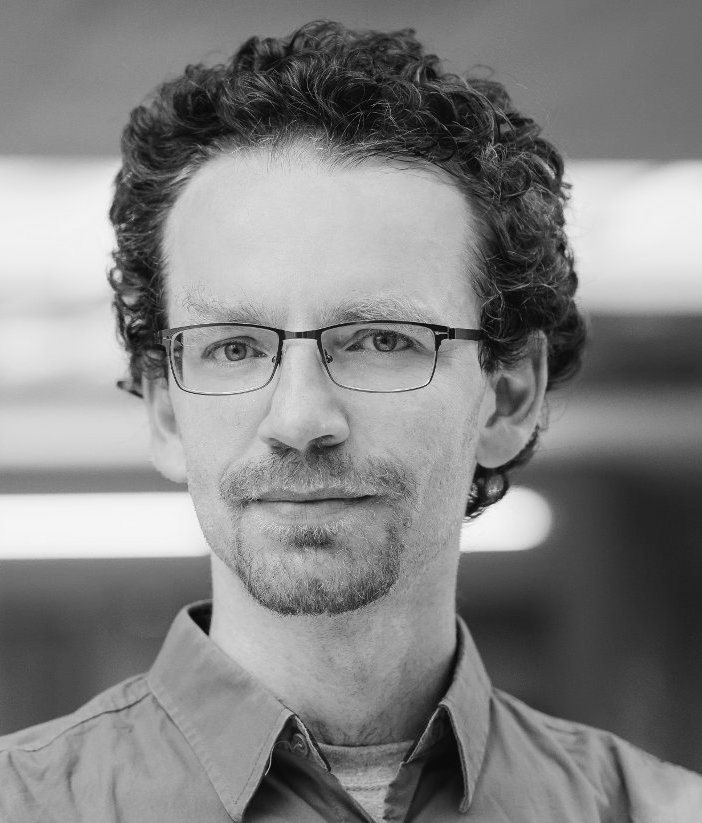}}]{Nicolas Saunier} received an engineering degree and a Doctorate (Ph.D.) in computer science from Telecom ParisTech, Paris, France, respectively in 2001 and 2005. He is currently a Full Professor with the Civil, Geological and Mining Engineering Department at Polytechnique Montreal, Montreal, QC, Canada. His research interests include intelligent transportation, road safety, and data science for transportation.
\end{IEEEbiography}

\begin{IEEEbiography}[{\includegraphics[width=1in,height=1.25in,clip,keepaspectratio]{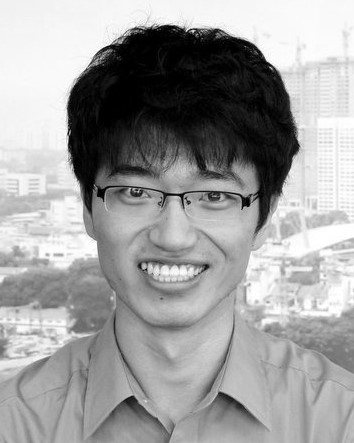}}]{Lijun Sun} received the B.S. degree in Civil Engineering from Tsinghua University, Beijing, China, in 2011, and Ph.D. degree in Civil Engineering (Transportation) from National University of Singapore in 2015. He is currently an Assistant Professor with the Department of Civil Engineering at McGill University, Montreal, QC, Canada. His research centers on intelligent transportation systems, machine learning, spatiotemporal modeling, travel behavior, and agent-based simulation. He is a member of the IEEE.
\end{IEEEbiography}




\end{document}